\tikzstyle{every pin edge}=[<-,shorten <=1pt]
\tikzstyle{neuron}=[circle,fill=black!25,minimum size=17pt,inner sep=0pt]
\tikzstyle{input neuron}=[neuron, fill=green!50]
\tikzstyle{output neuron}=[neuron, fill=red!50]
\tikzstyle{hidden neuron}=[neuron, fill=blue!50]
\tikzstyle{annot} = [text width=4em, text centered]
\newcommand{\relu}{\text{ReLU}\xspace{}}
\newcommand{\Mo}{\mathbf{I}}
\newcommand{\basic}{\mathcal{B}}
\newcommand{\allvars}{\mathcal{X}}
\newcommand{\ub}{u}
\newcommand{\lb}{l}
\newcommand{\reluSet}{R}
\newcommand{\assignment}{\alpha{}}
\newcommand{\tr}{\mathcal{T}_{\mathbb{R}}}
\newcommand{\trr}{\mathcal{T}_{\mathbb{R}R}}
\newcommand{\sat}{\texttt{SAT}}
\newcommand{\unsat}{\texttt{UNSAT}}
\newcommand{\timeout}{\texttt{TIMEOUT}}
\newcommand{\posWeights}{\text{pos}\xspace{}}
\newcommand{\negWeights}{\text{neg}\xspace{}}
\newcommand{\slackPlus}{\text{slack}^+\xspace{}}
\newcommand{\slackMinus}{\text{slack}^-\xspace{}}
\newcommand{\slack}{\text{slack}\xspace{}}
\newcommand{\drule}[2]{
\renewcommand{\arraystretch}{1.2}
\(\begin{array}{c}
#1 \\
\hline
#2
\end{array}\)
}
\newcommand{\rulename}[1]{\ensuremath{\mathsf{#1}}\xspace}
\newcommand{\irulename}[2]{\ensuremath{\mathsf{#1}_{#2}}\xspace}
\newcommand{\pivot}[1]{\irulename{Pivot}{#1}}
\newcommand{\failure}{\rulename{Failure}}
\newcommand{\update}{\rulename{Update}}
\newcommand{\updateb}{\irulename{Update}{b}}
\newcommand{\updatef}{\irulename{Update}{f}}
\newcommand{\success}{\rulename{Success}}
\newcommand{\reluSuccess}{\rulename{ReluSuccess}}
\newcommand{\pivotForRelu}{\rulename{PivotForRelu}}
\newcommand{\reluSplit}{\rulename{ReluSplit}}
\newcommand{\learnUB}{\rulename{deriveUpperBound}}
\newcommand{\learnLB}{\rulename{deriveLowerBound}}
\newcommand{\pivotOperation}{\textit{pivot}}
\newcommand{\updateOperation}{\textit{update}}
\begin{document}

\title{Reluplex: An Efficient SMT Solver for Verifying 
Deep Neural Networks\thanks{This is the extended version of a paper
  with the same title that appeared at CAV 2017.}}
 
\author{Guy Katz, Clark Barrett, David Dill, Kyle Julian and Mykel Kochenderfer}
\institute{
Stanford University, USA \\
\{guyk, clarkbarrett, dill, kjulian3, mykel\}@stanford.edu
}

\maketitle

\begin{abstract}
Deep neural networks have emerged as a widely used and effective means for
tackling complex, real-world problems. However, a major obstacle in applying them to
safety-critical systems is the great difficulty in providing
formal guarantees about their behavior.
We present a novel, scalable, and efficient technique for verifying 
properties of deep neural networks (or providing counter-examples). The
technique is based on the simplex method, extended to handle
the non-convex \emph{Rectified Linear Unit} (\emph{ReLU})
activation function, which is a crucial ingredient in many modern neural networks.
The verification procedure tackles neural networks as a whole,
without making any simplifying assumptions.
We evaluated our technique on a prototype deep neural
network implementation of the next-generation airborne collision
avoidance system for unmanned aircraft
(ACAS Xu). Results show that our technique can
successfully prove properties of networks that are an order of magnitude larger
than the largest networks verified using existing methods.
\end{abstract}

\section{Introduction}
Artificial neural networks~\cite{RiTo99,FoBeCu16}
have emerged as a
promising approach for creating scalable and robust systems.
Applications include speech
recognition~\cite{HiDeYuDaMoJaSeVaNgSaKi12}, image
classification~\cite{KrSuHi12}, game
playing~\cite{SiHuMaGuSiVaScAnPaLaDi16}, and many others.
It is now clear that software that may be extremely
difficult for humans to implement can instead be created by
training \emph{deep neural networks} (\emph{DNN}s), and that the
performance of these DNNs is often comparable to, or even surpasses, the
performance of manually crafted software. 
DNNs are becoming widespread, and this trend is likely to continue and intensify. 

Great effort is now being put into
using DNNs as controllers for safety-critical systems such as autonomous
vehicles~\cite{BoDeDwFiFlGoJaMoMuZhZhZhZi16} and
airborne collision avoidance systems for unmanned aircraft
(ACAS Xu)~\cite{JuLoBrOwKo16}. 
DNNs are trained over a finite set of inputs and outputs and are
expected to \emph{generalize}, i.e. to behave correctly for previously-unseen inputs.
However, it has been observed that DNNs can react in unexpected and incorrect
ways to even slight perturbations of their
inputs~\cite{SzZaSuBrErGoFe13}. 
This unexpected behavior of DNNs is likely to result in unsafe
systems, or restrict the usage of DNNs in safety-critical applications.
Hence, there is an urgent need for methods that can provide formal 
 guarantees about DNN behavior. 
Unfortunately, manual reasoning about large DNNs is impossible, as 
their structure renders them incomprehensible to humans. Automatic
verification techniques are thus sorely needed, 
but here, the state of the art is a 
severely limiting factor. 

Verifying DNNs is a difficult problem. DNNs are large, non-linear, and
non-convex, and verifying even simple properties about them is
an NP-complete problem (see Section~\ref*{appendix:npc} of the
appendix). DNN verification is experimentally beyond the reach of
general-purpose tools such as \emph{linear programming} (\emph{LP})
solvers or
existing \emph{satisfiability modulo theories} (\emph{SMT})
solvers~\cite{PuTa12,BaIoLaVyNoCr16,HuKwWaWu16}, and thus far, dedicated tools
have only been able to handle very small networks (e.g. a single hidden layer
with only 10 to 20 hidden nodes~\cite{PuTa10, PuTa12}).
 
The difficulty in proving properties about DNNs is caused by the
presence of \emph{activation functions}. A DNN is comprised
of a set of layers of nodes, and the value of each node is
determined by computing a linear combination of values from nodes in the
preceding layer and then applying an activation function to the
result.
These activation functions are non-linear and render the
problem non-convex.
We focus here on DNNs with a specific kind of activation
function, called a \emph{Rectified Linear Unit} (\emph{ReLU})~\cite{NaHi10}. When
the ReLU function is applied to a node with a  
positive value, it returns the value unchanged (the \emph{active} case), but when the value is negative,
the ReLU function returns $0$ (the \emph{inactive} case).
 ReLUs are very widely used~\cite{KrSuHi12,MaHaNg13}, and it has been suggested that  their
 piecewise linearity allows DNNs to 
generalize well to previously unseen inputs~\cite{FoBeCu16, GlBoBe11,
  NaHi10, JaKaLe09}.
Past efforts at verifying properties of DNNs with ReLUs have
had to make significant simplifying assumptions~\cite{HuKwWaWu16, BaIoLaVyNoCr16} ---
for instance, by considering only small input regions in which all ReLUs are
fixed at either the active or inactive state~\cite{BaIoLaVyNoCr16}, hence making the problem
convex but at the cost of being able to verify only an approximation of the
desired property.

We propose a novel, scalable, and efficient algorithm for
verifying properties of DNNs with ReLUs.  We address the issue of
the activation functions head-on, by extending the simplex algorithm ---
a standard algorithm for solving LP instances --- to support ReLU
constraints. This is achieved by leveraging the piecewise linear
nature of ReLUs and attempting to 
gradually satisfy the constraints that they impose
as the algorithm searches for a feasible solution.
We call the algorithm \emph{Reluplex}, for ``ReLU with Simplex''.

The problem's NP-completeness means
that we must expect the worst-case performance of the algorithm to be poor.
However, as is often the case with SAT and SMT solvers, the performance in
practice can be quite reasonable; in particular, our experiments
show that during the search for a solution, many of the 
ReLUs can be ignored or even discarded altogether,
reducing the search space by an order of
magnitude or more. Occasionally, Reluplex will still need to \emph{split} on a specific
ReLU constraint --- i.e., guess that it is either active or
inactive, and possibly backtrack later if the choice leads to a
contradiction.

We evaluated Reluplex on a family of 45 real-world DNNs,
developed as an early prototype for the next-generation airborne collision avoidance system
 for unmanned aircraft
ACAS Xu~\cite{JuLoBrOwKo16}. These fully connected DNNs have 8 layers and 300 ReLU
nodes each, and are intended to be run onboard aircraft. They take in sensor data indicating the speed and
present course of the aircraft (the \emph{ownship}) and that of any
nearby intruder aircraft, and issue appropriate navigation
advisories.
 These advisories indicate whether the
aircraft is clear-of-conflict, in which case the present course can be
maintained, or whether it should turn to
 avoid collision. We successfully proved several properties of
 these networks, e.g. that a clear-of-conflict advisory will always be
 issued if the intruder is sufficiently far away or that it will never be
 issued if the intruder is sufficiently close and on a collision
 course with the ownship. 
Additionally, we were able to prove certain \emph{robustness}
properties~\cite{BaIoLaVyNoCr16} of the networks, meaning that small
adversarial perturbations do not change the advisories produced for
certain inputs.

Our contributions can be summarized as follows. We
\begin{inparaenum}[(i)]
\item present Reluplex, an SMT solver for a theory of linear
  real arithmetic with ReLU constraints;
\item show how DNNs and properties of interest can be encoded as
  inputs to Reluplex;
\item discuss several implementation details that are crucial to performance and scalability, such as the
  use of floating-point arithmetic, bound derivation for ReLU variables,
  and conflict analysis; and 
\item conduct a thorough evaluation on the DNN implementation of
  the prototype ACAS Xu system, demonstrating the ability of Reluplex to scale
  to DNNs that are an order of magnitude larger than those that can be analyzed using
  existing techniques.
\end{inparaenum}

The rest of the paper is organized as follows. We begin with some background
on DNNs, SMT, and simplex in
Section~\ref{sec:background}. 
 The abstract Reluplex algorithm is described in 
Section~\ref{sec:reluplex}, with key implementation details
highlighted in Section~\ref{sec:implementation}.
We then 
describe
the ACAS Xu system and its prototype DNN implementation that we used as a case-study
in Section~\ref{sec:acasxu},
followed by
experimental results in Section~\ref{sec:evaluation}.
 Related work is discussed in Section~\ref{sec:relatedWork}, and we
conclude in Section~\ref{sec:conclusion}.

\section{Background}
\label{sec:background}

\subsubsection{Neural Networks.}
Deep neural networks (DNNs) are comprised of an input layer, an output layer, 
and multiple hidden layers in between. A layer is comprised of multiple
nodes, each connected to 
nodes from the preceding layer using a predetermined set of weights
(see Fig.~\ref{fig:fullyConnectedNetwork}).
Weight selection is crucial, and is performed during a
\emph{training} phase (see, e.g.,~\cite{FoBeCu16} for an overview).
By assigning values to inputs
and then feeding them forward through the network, values for each layer
can be computed from the values of the previous layer, finally resulting in values
for the outputs.

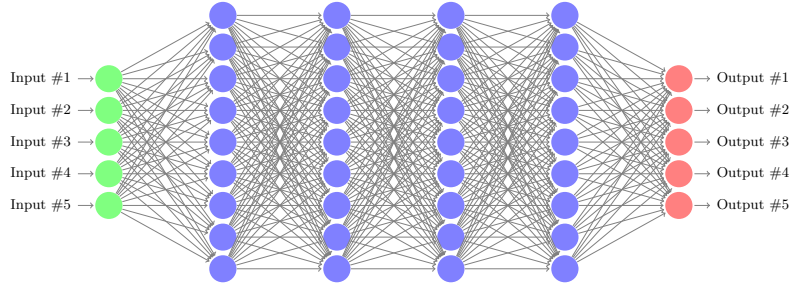
\begin{figure}
\begin{center}
\scalebox{0.6}{
\def\layersep{2.5cm}
\def\vertSepFactory{0.7}
\begin{tikzpicture}[shorten >=1pt,->,draw=black!50, node distance=\layersep]
    \foreach \name / \y in {1,...,5}
        \node[input neuron, pin=left:Input \#\y] (I-\name) at (0,-\vertSepFactory * \y) {};

    \foreach \name / \y in {1,...,9}
        \path[yshift=1.4cm]
            node[hidden neuron] (H1-\name) at (1*\layersep,-\vertSepFactory * \y cm) {};

    \foreach \name / \y in {1,...,9}
        \path[yshift=1.4cm]
            node[hidden neuron] (H2-\name) at (2*\layersep,-\vertSepFactory * \y cm) {};

    \foreach \name / \y in {1,...,9}
        \path[yshift=1.4cm]
            node[hidden neuron] (H3-\name) at (3*\layersep,-\vertSepFactory * \y cm) {};

    \foreach \name / \y in {1,...,9}
        \path[yshift=1.4cm]
            node[hidden neuron] (H4-\name) at (4*\layersep,-\vertSepFactory * \y cm) {};

    \foreach \name / \y in {1,...,5}
        \node[output neuron,pin={[pin edge={->}]right:Output \#\y}]
        (O-\name) at (5*\layersep, -\vertSepFactory * \y cm) {};

    \foreach \source in {1,...,5}
        \foreach \dest in {1,...,9}
            \path (I-\source) edge (H1-\dest);

    \foreach \source in {1,...,9}
        \foreach \dest in {1,...,9}
            \path (H1-\source) edge (H2-\dest);

    \foreach \source in {1,...,9}
        \foreach \dest in {1,...,9}
            \path (H2-\source) edge (H3-\dest);

    \foreach \source in {1,...,9}
        \foreach \dest in {1,...,9}
            \path (H3-\source) edge (H4-\dest);

    \foreach \source in {1,...,9}
        \foreach \dest in {1,...,5}
            \path (H4-\source) edge (O-\dest);

\end{tikzpicture}
}
\caption{A fully connected DNN with 5 input nodes (in green), 5 output
  nodes (in red), and
  4 hidden layers containing a total of 36 hidden nodes (in blue).
}
\label{fig:fullyConnectedNetwork}
\end{center}
\end{figure}
 
The value of each hidden node in the network is determined by
calculating a linear combination of node values from the previous
layer, and then applying a non-linear \emph{activation function}~\cite{FoBeCu16}.
Here, we focus on the 
Rectified Linear Unit (ReLU) activation function~\cite{NaHi10}.
 When a ReLU activation function is applied to a node,
that node's value is calculated as
the maximum of the linear combination of nodes from the previous
layer and $0$. We can thus regard ReLUs as the 
function $\relu{}(x) = \max{}(0, x)$.

Formally, for a DNN $N$, we
use $n$ to denote the number of layers and $s_i$ to denote the size of
layer $i$ (i.e., the number of its nodes).
 Layer $1$ is the input layer, layer $n$ is the output layer, and
 layers $2,\ldots,n-1$ are the hidden layers. 
The value of the $j$-th node of layer $i$ is denoted $v_{i,j}$ 
and the column vector $[v_{i,1},\ldots,v_{i,s_i}]^T$ is denoted
$V_i$.
Evaluating $N$ entails calculating $V_n$ for a given assignment $V_1$
of the input layer. This
is performed by propagating the input values through the network
using predefined weights and biases, and applying the activation
functions --- ReLUs, in our case. Each layer $2\leq i\leq n$ has a weight
matrix $W_i$ of size $s_{i}\times s_{i-1}$ and a bias vector $B_i$ of size
$ s_i$, and its values are given by
$
V_i = \relu{}(W_i  V_{i-1} + B_i),
$
with the ReLU
function being applied element-wise. 
This rule is applied repeatedly for each layer until $V_n$ is
calculated.
When the weight matrices $W_1,\ldots
W_n$ do not have any zero entries, the network is said to be \emph{fully
  connected} (see Fig.~\ref{fig:fullyConnectedNetwork} for an illustration).

Fig.~\ref{fig:runningExample} depicts a small network that we will use
as a running example.
The network has one input node, one
output node and a single hidden layer with two nodes. 
The bias vectors are set
to $0$ and are ignored, and the weights are shown for each edge.
The ReLU function is
applied to each of the hidden nodes.
It is possible to show that, due to the effect of the ReLUs,
the network's output is always identical to its input:
$v_{31}\equiv v_{11}$.

\begin{figure}[htp]
  \begin{center}
    \scalebox{1} {
      \def\layersep{1.5cm}
    \begin{tikzpicture}[shorten >=1pt,->,draw=black!50, node distance=\layersep,font=\footnotesize]

      \node[input neuron] (I-1) at (0,-1) {$v_{11}$};

      \path[yshift=0.5cm] node[hidden neuron] (H-1)
      at (\layersep,-1 cm) {$v_{21}$};
      \path[yshift=0.5cm] node[hidden neuron] (H-2)
      at (\layersep,-2 cm) {$v_{22}$};

      \node[output neuron] at (2*\layersep, -1) (O-1) {$v_{31}$};

      \path (I-1) edge[] node[above] {$1.0$} (H-1);
      \path (I-1) edge[] node[below] {$-1.0$} (H-2);
      \path (H-1) edge[] node[above] {$1.0$} (O-1);
      \path (H-2) edge[] node[below] {$1.0$} (O-1);

      \node[annot,above of=H-1, node distance=1cm] (hl) {Hidden layer};
      \node[annot,left of=hl] {Input layer};
      \node[annot,right of=hl] {Output layer};
    \end{tikzpicture}
    }
    \captionsetup{size=small}
    \captionof{figure}{A small neural network.}
    \label{fig:runningExample}
  \end{center}
\end{figure}
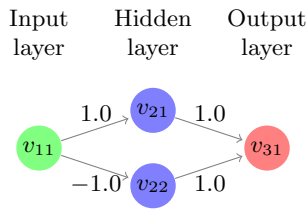

\subsubsection{Satisfiability Modulo Theories.}
We present our algorithm as a theory solver in the context of satisfiability
modulo theories (SMT).\footnote{Consistent with most treatments of SMT, we assume many-sorted first-order logic with
equality as our underlying formalism (see, e.g.,~\cite{BSST09} for details).}
A \emph{theory} is a pair $T = (\Sigma, \Mo)$ where 
$\Sigma$ is a signature and $\Mo$ is a class of $\Sigma$-interpretations,
the \emph{models} of $T$,
that is closed under variable reassignment.
A $\Sigma$-formula $\varphi$ is 
$T$-\emph{satisfiable} (resp., $T$-\emph{unsatisfiable})
if it is satisfied by some (resp., no) interpretation in $\Mo$.  In this paper,
we consider only \emph{quantifier-free} formulas.
The SMT problem is the problem of determining the $T$-satisfiability of a formula for a given
theory $T$.

Given a theory $T$ with signature $\Sigma$, the DPLL($T$)
architecture~\cite{NiOlTi06DPLLT} provides a generic approach for
determining the $T$-satisfiability of $\Sigma$-formulas.
In DPLL($T$), a Boolean satisfiability (SAT) engine operates on a Boolean
abstraction of the formula, performing Boolean propagation,
case-splitting, and Boolean conflict resolution. The SAT engine is
coupled with 
a dedicated \emph{theory solver}, which
checks the $T$-satisfiability of the decisions made by the SAT engine.
 \emph{Splitting-on-demand}~\cite{BaNiOlTi06}
extends DPLL($T$) by allowing theory solvers to delegate
case-splitting to the SAT engine in a generic and modular way.  In
Section~\ref{sec:reluplex}, we present 
our algorithm as a deductive calculus (with splitting rules) operating on conjunctions of literals.
The DPLL($T$) and splitting-on-demand mechanisms can then be
used to obtain a full decision procedure for arbitrary formulas.

\subsubsection{Linear Real Arithmetic and Simplex.}
In the context of DNNs, a particularly relevant theory is that
of real arithmetic, which we denote as $\tr$. $\tr{}$ consists of the
signature containing all rational number constants and the symbols 
 $\{+,-,\cdot,\leq,\geq\}$, paired with the standard model of the real numbers.
We focus on \emph{linear} formulas: formulas over $\tr{}$ with the additional
restriction that the multiplication symbol $\cdot$ can only appear if at least
one of its operands is a rational constant. Linear atoms
can always be rewritten into the form $\sum_{x_i\in
  \allvars}c_ix_i\bowtie d$, for $\bowtie\ \in\{=,\leq,\geq\}$, where
$\allvars$ is a set of variables and $c_i,d$ are rational constants.

The simplex method~\cite{Dantzig1963} is a standard and highly efficient
decision procedure for determining the $\tr{}$-satisfiability of conjunctions of linear
atoms.\footnote{There exist SMT-friendly extensions of simplex (see e.g.~\cite{Ki14}) which can handle
$\tr{}$-satisfiability of arbitrary literals, including strict inequalities and
disequalities, but we omit these extensions here for simplicity (and without loss of generality).}
Our
algorithm extends simplex, and so we begin with an abstract
calculus for the original algorithm (for a more thorough description see,
e.g.,~\cite{Va96}).
The rules of the calculus operate over data structures we call
\emph{configurations}.  For a given set of variables $\allvars = \{x_1,\ldots,x_n\}$,
a simplex configuration is either one of the distinguished symbols
$\{\sat{},\unsat{}\}$ or a tuple $\langle \basic, T, \lb,
\ub, \assignment\rangle$, where:
$\basic\subseteq \allvars$ is a set of basic variables;
$T$, the \emph{tableau}, contains for each $x_i\in\basic$
an equation $x_i =
  \sum_{x_j\notin\basic} c_j x_j$;
  $\lb, \ub$ are mappings that assign each
  variable $x\in\allvars$ a lower and an upper bound, respectively; and
  $\assignment$, the \emph{assignment}, maps each variable $x\in\allvars$
  to a real value. The initial configuration (and in particular the initial tableau
  $T_0$) is derived from a conjunction of input atoms as follows: for each atom $\sum_{x_i\in
  \allvars}c_ix_i\bowtie d$, a new basic variable $b$ is introduced, the
  equation $b=\sum_{x_i\in \allvars}c_ix_i$ is added to the tableau, and $d$ is
  added as a bound for $b$ (either upper, lower, or both, depending on
  $\bowtie$).  
  The initial assignment is set to $0$ for all variables, ensuring that all
  tableau equations hold (though variable bounds may be violated).

The tableau $T$ can be regarded as a matrix expressing
each of the basic variables (variables in $\basic{}$) as a linear combination of non-basic
variables (variables in $\allvars{}\setminus\basic{}$). The rows of
$T$ correspond to the variables in $\basic{}$ and its columns to
those of $\allvars{}\setminus\basic{}$. For $x_i\in\basic{}$ and
$x_j\notin\basic{}$ we denote by $T_{i,j}$ the coefficient
$c_j$ of $x_j$ in the equation $x_i=\sum_{x_j\notin\basic} c_j x_j$.
The tableau is changed via pivoting: the switching of a basic variable
$x_i$ (the \emph{leaving} variable) with a non-basic variable $x_j$
(the \emph{entering} variable) for which $T_{i,j}\neq 0$. A
\pivotOperation{}($T,i,j$) operation returns a new tableau in which the equation $x_i=\sum_{x_k\notin\basic}
c_k x_k$ has been replaced by the equation
$
x_j = \frac{x_i}{c_j} - \sum_{x_k\notin\basic, k\neq j}\frac{c_k}{c_j}x_k
$, and in which every occurrence of $x_j$ in each of the other equations has been
replaced by the right-hand side of the new equation (the resulting expressions
are also normalized to retain the tableau form).
The variable assignment $\assignment{}$ is changed via \emph{update} operations that are
applied to non-basic variables:
for $x_j\notin\basic{}$, an \updateOperation{}($\assignment, x_j,\delta$) operation
returns
an updated
assignment $\assignment'$ identical to $\assignment$, except that
$\assignment'(x_j)=\assignment(x_j)+\delta$ and for
every $x_i\in \basic$, we have
$
\assignment'(x_i)=\assignment(x_i)+\delta\cdot T_{i,j}.
$
To simplify later presentation we also denote:
\begin{align*}
\slack^+(x_i) 
&=
\{
x_j\notin\basic{}\ |\ 
(T_{i,j}>0
\wedge
\assignment(x_j) < \ub(x_j))
\vee
(T_{i,j}<0
\wedge
\assignment(x_j) > \lb(x_j))
\\
\slack^-(x_i) 
&=
\{
x_j\notin\basic{}\ |\ 
(T_{i,j}<0
\wedge
\assignment(x_j) < \ub(x_j))
\vee
(T_{i,j}>0
\wedge
\assignment(x_j) > \lb(x_j))
\end{align*}

The rules of the simplex calculus are provided in
Fig.~\ref{fig:abstractSimplex} in \emph{guarded assignment form}.  A rule
applies to a configuration $S$ if all of the rule's premises hold for $S$.  A
rule's conclusion describes how each component of $S$ is changed, if at all.
When $S'$ is the result of applying a rule to $S$, we say that $S$ derives $S'$.  A sequence of configurations $S_i$
where each $S_i$ derives $S_{i+1}$ is called a \emph{derivation}.

\begin{figure}[t]
\begin{centering}
\scriptsize

\pivot{1}
\drule{
x_i\in\basic,
\ \ 
\assignment (x_i) < \lb(x_i), 
\ \ 
x_j\in\slackPlus(x_i)
}
{
T := \pivotOperation{}(T,i,j), 
\ \ 
\basic := \basic\cup \{x_j\} \setminus \{x_i\}
}
\medskip

\pivot{2}
\drule{
x_i\in\basic,
\ \ 
\assignment (x_i) > \ub(x_i),
\ \
x_j\in\slackMinus(x_i)
}
{
T:=\pivotOperation{}(T,i,j), 
\ \ 
\basic := \basic\cup \{x_j\} \setminus \{x_i\}
}
\medskip 

\update
\drule{
x_j\notin\basic, 
\ \ 
\assignment(x_j) < \lb(x_j) \vee \assignment(x_j) > \ub(x_j), 
\ \ 
\lb(x_j) \leq \assignment(x_j) + \delta \leq \ub(x_j)
}
{
\assignment := \updateOperation(\assignment, x_j, \delta)
}
\medskip

\failure
\drule{
x_i\in\basic,
\ \ 
( \assignment (x_i) < \lb(x_i)
\ \wedge \
\slackPlus(x_i)=\emptyset )
\vee
( \assignment (x_i) > \ub(x_i)
\ \wedge \
\slackMinus(x_i)=\emptyset )
}
{
\unsat{}
}
\medskip

\success
\drule{
\forall x_i\in\allvars. \ 
\lb(x_i) \leq \assignment(x_i) \leq \ub(x_i)
}
{
\sat{}
}
\caption{Derivation rules for the abstract simplex algorithm.}
\label{fig:abstractSimplex}
\end{centering}
\end{figure}

The \update{} rule (with appropriate values of $\delta$) is used to
enforce that non-basic 
variables satisfy their bounds. Basic variables
cannot be directly updated.  Instead,
if a basic variable $x_i$ is too small or too great,
either the \pivot{1} or the \pivot{2} rule is applied,
respectively, to pivot it with a non-basic variable $x_j$.
This makes $x_i$ non-basic so that its assignment can be adjusted
using the \update{} rule.  Pivoting is only allowed when $x_j$ affords
\emph{slack}, that is, the assignment for $x_j$ can be adjusted to bring $x_i$ closer to
its bound without violating its own bound.
Of course, once pivoting occurs
and the \update rule is used to bring $x_i$ within its bounds, other variables
(such as the now basic $x_j$) may be sent outside their bounds,
in which case they must be corrected in a later iteration.
If a basic variable is out of bounds, but none of the non-basic variables
 affords it any slack,
then the \failure rule applies and the problem is unsatisfiable.
Because the tableau is only changed by scaling and adding rows, 
the set of variable assignments that satisfy its equations is always
kept identical to that of $T_0$.
Also, the \updateOperation{} operation guarantees that $\assignment{}$
continues to satisfy the equations of $T$.
Thus, if all variables are within bounds then 
the \success rule can be applied, indicating that $\alpha$ constitutes a satisfying assignment for the original problem.

It is well-known that the simplex calculus 
 is \emph{sound}~\cite{Va96} (i.e. if a derivation ends in $\sat{}$ or $\unsat{}$, then the original problem is satisfiable or unsatisfiable,
respectively) and \emph{complete} (there always exists a derivation
ending in either $\sat{}$
or $\unsat{}$ from any starting configuration).
Termination can be guaranteed if certain strategies are
used in applying the transition rules --- in particular in
picking the leaving and entering variables when 
 multiple options exist~\cite{Va96}. Variable selection strategies are also known to have a
dramatic effect on performance~\cite{Va96}. We note that the version of
simplex described above is usually referred to as \emph{phase one}
simplex, and is usually followed by a \emph{phase two} in which the
solution is optimized according to a cost
function. However, as we are only considering satisfiability,
phase two is not required.

\section{From Simplex to Reluplex}
\label{sec:reluplex}

The simplex
algorithm described in Section~\ref{sec:background} is an efficient
means for solving 
problems that can be encoded as a conjunction of atoms.
Unfortunately, while the weights, biases, and certain properties
of DNNs can be encoded this way, the non-linear ReLU functions cannot. 

When a theory solver operates within an SMT solver, input
atoms can be embedded in arbitrary Boolean structure.
 A na\"ive approach is then to encode ReLUs using disjunctions, which is
 possible because ReLUs are piecewise linear.
However, this encoding requires the
SAT engine within the SMT solver to enumerate the different cases. In the worst
case, for a DNN with $n$ ReLU nodes, the solver ends up splitting the problem
into $2^n$ sub-problems, each of which is a conjunction of atoms.
As observed by us and
others~\cite{BaIoLaVyNoCr16,HuKwWaWu16}, this theoretical worst-case behavior is
also seen in practice, and hence this approach is practical only for
very small networks.  A
similar phenomenon occurs when encoding DNNs as mixed integer
problems (see Section~\ref{sec:evaluation}).

We take a different route and extend the theory $\tr{}$ to a theory $\trr{}$ of reals and
ReLUs.  $\trr{}$ is almost identical to $\tr{}$, except that its signature
additionally includes the binary predicate $\relu{}$ with the interpretation:
$\relu{}(x,y)$ iff $y=\max{}(0,x)$.  Formulas are then assumed to contain atoms that
are either linear inequalities or applications of the $\relu{}$ predicate to
linear terms.

DNNs and their (linear) properties can be
directly encoded as conjunctions of $\trr{}$-atoms. 
The main idea is to encode a single ReLU node $v$ as a \emph{pair} of
variables, $v^b$ and $v^f$, and then assert $\relu{}(v^b,v^f)$.
$v^b$, the \emph{backward-facing} variable, is used to express the
connection of $v$ to nodes from
the preceding layer; whereas
 $v^f$, the
\emph{forward-facing} variable, 
 is used for the connections of $x$ to 
the following layer (see Fig.~\ref{fig:nnRelus}).
The rest of this section is devoted
to presenting an efficient algorithm, Reluplex, for deciding the
satisfiability of a conjunction of such atoms.

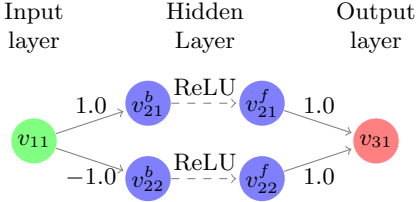
\begin{figure}[htp]
  \begin{center}
    \scalebox{1} {
      \def\layersep{1.5cm}
        \begin{tikzpicture}[shorten >=1pt,->,draw=black!50, node distance=\layersep,font=\footnotesize]
      \node[input neuron] (I-1) at (0,-1) {$v_{11}$};

      \path[yshift=0.5cm] node[hidden neuron] (H-1b)
      at (\layersep,-1 cm) {$v^b_{21}$};
      \path[yshift=0.5cm] node[hidden neuron] (H-2b)
      at (\layersep,-2 cm) {$v^b_{22}$};

      \path[yshift=0.5cm] node[hidden neuron] (H-1f)
      at (2*\layersep,-1 cm) {$v^f_{21}$};
      \path[yshift=0.5cm] node[hidden neuron] (H-2f)
      at (2*\layersep,-2 cm) {$v^f_{22}$};

      \node[output neuron] at (3*\layersep, -1) (O-1) {$v_{31}$};

      \path (I-1) edge[] node[above] {$1.0$} (H-1b);
      \path (I-1) edge[] node[below] {$-1.0$} (H-2b);
      \path (H-1f) edge[] node[above] {$1.0$} (O-1);
      \path (H-2f) edge[] node[below] {$1.0$} (O-1);
      \draw[dashed] (H-1b) -- node[above] {ReLU} (H-1f);
      \draw[dashed] (H-2b) -- node[above] {ReLU} (H-2f);

      \node[annot,above of=I-1, node distance = 1.5cm] (hl) {Input layer};
      \node[annot,above of=O-1, node distance = 1.5cm] (hr) {Output
        layer};

      \node[annot] at ($(hl) !.5! (hr)$) (hm) {Hidden Layer};
        \end{tikzpicture}
      }
\caption{The network from Fig.~\ref{fig:runningExample}, with ReLU
  nodes split into 
  backward- and forward-facing variables.}
\label{fig:nnRelus}
\end{center}
\end{figure}

\subsubsection{The Reluplex Procedure.}
As with simplex, Reluplex
 allows variables to temporarily violate their bounds
as it iteratively looks for a
feasible variable assignment. However, Reluplex also allows 
 variables that are members of ReLU pairs to temporarily
violate the ReLU semantics.
Then, as it iterates, Reluplex repeatedly picks variables that are either
 out of bounds or that violate a ReLU, and corrects them using
\pivot{} and \update{} operations.

For a given set of variables $\allvars = \{x_1,\ldots,x_n\}$,
a Reluplex configuration is either one of the distinguished symbols
$\{\sat{},\unsat{}\}$ or a tuple
$\langle \basic, T, \lb, \ub, \assignment, \reluSet \rangle$,
where $\basic, T,\lb,\ub$ and $\assignment$ are as before, and
$\reluSet\subset \allvars\times \allvars$ is the set of ReLU
connections.  The initial configuration for a conjunction of atoms
is also obtained as before except that $\langle x,y\rangle\in \reluSet$ iff
$\relu{}(x,y)$ is an atom.
The simplex transition rules 
\pivot{1}, \pivot{2} and \update{} are included also in Reluplex, as
they are designed to handle out-of-bounds violations. We replace the
\success{} rule with the \reluSuccess{} rule and add rules for
handling ReLU violations, as depicted in
Fig.~\ref{fig:abstractReluplex}. The \updateb{} and \updatef{} rules
allow a broken ReLU connection to be corrected by updating the backward- or
forward-facing variables, respectively, provided that these variables are
non-basic. The \pivotForRelu{} rule allows a basic variable appearing in a ReLU to be
pivoted so that
either \updateb{} or \updatef{} can be applied (this is needed to make progress when both variables in a ReLU are
basic and their assignments do not satisfy the ReLU semantics).
The \reluSplit{} rule is used for splitting
on certain ReLU connections, guessing whether they are
active (by setting $\lb(x_i):=0$)  or inactive (by setting
$\ub(x_i):=0$).

\begin{figure}[t]
\begin{centering}
\scriptsize

\updateb
\drule{
x_i\notin\basic, 
\ \
\langle x_i, x_j\rangle \in \reluSet,
\ \ 
\assignment(x_j)\neq \max{}(0,\assignment(x_i)),
\ \ 
 \assignment(x_j)\geq 0
}
{
\assignment := \updateOperation(\assignment, x_i, \assignment(x_j)-\assignment(x_i))
}
\medskip

\updatef
\drule{
x_j\notin\basic, 
\ \
\langle x_i, x_j\rangle \in \reluSet,
\ \ 
\assignment(x_j)\neq \max{}(0,\assignment(x_i))
}
{
\assignment := \updateOperation(\assignment, x_j, \max{}(0, \assignment(x_i)) -\assignment(x_j))
}
\medskip

\pivotForRelu
\drule{
x_i\in\basic,
\ \ 
\exists x_l. \ \langle x_i, x_l\rangle \in \reluSet \vee \langle x_l, x_i\rangle \in \reluSet,
\ \ 
x_j\notin\basic,
\ \ 
T_{i,j} \neq 0
}
{
T:=\pivotOperation{}(T,i,j), 
\ \ 
\basic := \basic\cup \{x_j\} \setminus \{x_i\}
}
\medskip

\reluSplit
\drule{
\langle x_i, x_j\rangle \in \reluSet, 
\ \ 
\lb(x_i)<0,
\ \ 
 \ub(x_i)>0
}
{
\ub(x_i):= 0 
\qquad
\lb(x_i):= 0 
}
\medskip

\reluSuccess
\drule{
\forall x\in\allvars. \ 
\lb(x) \leq \assignment(x) \leq \ub(x), 
\ \ 
\forall \langle x^b,x^f \rangle \in \reluSet. \
\assignment(x^f) = \max{}(0, \assignment(x^b))
}
{
\sat{}
}
\caption{Additional derivation rules for the abstract Reluplex algorithm.}
\label{fig:abstractReluplex}
\end{centering}
\end{figure}

Introducing splitting  means that derivations are
no longer linear.  Using the notion of derivation trees, we
can show that Reluplex is sound and complete (see 
Section~\ref*{appendix:soundness} of the
appendix).
In practice, splitting can be managed by a SAT engine with splitting-on-demand~\cite{BaNiOlTi06}.
The na\"ive approach mentioned at the beginning of this section can be
simulated by applying the \reluSplit{} rule eagerly until it no longer applies
and then solving each derived sub-problem separately (this reduction trivially
guarantees termination just as do branch-and-cut techniques in mixed integer solvers~\cite{PaRi91}).
However, a more scalable
 strategy is to try to fix broken ReLU pairs
using the \updateb{} and \updatef{} rules first, and split
only when the number of updates to a specific ReLU pair
exceeds some threshold. Intuitively, this is likely
to limit splits to ``problematic'' ReLU pairs,
while still guaranteeing termination (see 
Section~\ref*{appendix:termination} of the
appendix).
Additional details appear in Section~\ref{sec:evaluation}.

\subsubsection{Example.}
\sloppy
To illustrate the use of the derivation rules, we use Reluplex to solve a
simple example. Consider the network in
Fig.~\ref{fig:nnRelus}, and suppose we wish to check whether 
it is possible to satisfy $v_{11}\in [0,1]$ and $v_{31}\in [0.5,1]$. 
As we know that the network outputs its input unchanged ($v_{31}\equiv v_{11}$), we expect
Reluplex to be able to derive \sat{}.
The initial Reluplex configuration is obtained
by introducing new basic variables $a_1,a_2,a_3$,
and encoding the network with the equations:
\[
  a_1 = -v_{11} + v^b_{21}  \qquad
  a_2 = v_{11} + v^b_{22}  \qquad
  a_3 = - v^f_{21} - v^f_{22} + v_{31}
\]
The equations above form the initial tableau $T_0$, and the
initial set of basic variables is
$\basic{} = \{a_1,a_2,a_3\}$. 
The set of ReLU connections is 
$\reluSet=\{
\langle v^b_{21},v^f_{21} \rangle,
\langle v^b_{22},v^f_{22} \rangle
\}$.
The initial assignment of all variables is set to $0$.
The lower and upper bounds of the
basic variables are set to 0, in order to enforce the equalities
that they represent. The bounds for the input and output variables are set
according to the problem at hand; and the hidden variables are
unbounded, except that forward-facing variables are, by
definition, non-negative:

\noindent
\begin{center}
\scalebox{1}{
\begin{tabular}{l|ccccccccc}
  variable &
  $v_{11}$ &
  $v_{21}^b$ &
  $v_{21}^f$ &
  $v_{22}^b$ &
  $v_{22}^f$ &
  $v_{31}$ &
  $a_1$ &
  $a_2$ &
  $a_3$ 
\\
  \cline{2-10}
  lower bound &
  $0$ &
  $-\infty$ &
  $0$ &
  $-\infty$ &
  $0$ &
  $0.5$ &
  $0$ &
  $0$ &
  $0$ 
\\
  \cline{2-10}
  assignment &
  $0$ &
  $0$ &
  $0$ &
  $0$ &
  $0$ &
  $0$ &
  $0$ &
  $0$ &
  $0$              
\\
  \cline{2-10}
  upper bound &
  $1$ &
  $\infty$ &
  $\infty$ &
  $\infty$ &
  $\infty$ &
  $1$ &
  $0$ &
  $0$ &
  $0$ 
\end{tabular}
}
\end{center}

Starting from this initial configuration, our search strategy is to first fix
any out-of-bounds variables. Variable $v_{31}$ is non-basic and is out of bounds, so we
perform an \update{} step and set it to $0.5$. As a result, $a_3$,
which depends on $v_{31}$, is also set to $0.5$. $a_3$ is now basic
and out of bounds, so we pivot it with $v_{21}^f$, and then update $a_3$ back to
$0$. The tableau now consists of the equations:
\[
  a_1 = -v_{11} + v^b_{21}  \qquad
  a_2 = v_{11} + v^b_{22}  \qquad
  v_{21}^f = -v_{22}^f + v_{31} -a_3  
\]
And the assignment is
$\assignment(v_{21}^f) = 0.5$,
$\assignment(v_{31}) = 0.5$,
and $\assignment(v) = 0$ 
for all other variables $v$. At this
point, all variables are within
their bounds, but the \reluSuccess{} rule does not
apply because
 $\assignment(v_{21}^f) = 0.5 \neq 0 =
\max{}(0,\assignment(v_{21}^b))$. 

The next step is to fix the broken ReLU pair $\langle{v_{21}^b, v_{21}^f}\rangle$. Since $v_{21}^b$ is non-basic, we use \updateb{} to 
increase its value by $0.5$. The assignment becomes
$\assignment(v_{21}^b) = 0.5$,
$\assignment(v_{21}^f) = 0.5$,
$\assignment(v_{31}) = 0.5$,
$\assignment(a_1) = 0.5$,
and $\assignment(v) = 0$ for all other variables $v$. All ReLU constraints hold, but
 $a_1$ is now out of bounds. This is fixed
by pivoting $a_1$ with $v_{11}$ and then updating it. The
resulting tableau is:
\[
  v_{11} = v_{21}^b - a_1 \qquad
  a_2 = v_{21}^b + v^b_{22} - a_1  \qquad
  v_{21}^f = -v_{22}^f + v_{31} -a_3  
\]
Observe that because $v_{11}$ is now basic, it was eliminated from the
equation for $a_2$ and replaced with $v_{21}^b - a_1$.
 The non-zero assignments are now
$\assignment(v_{11}) = 0.5$,
$\assignment(v_{21}^b) = 0.5$,
$\assignment(v_{21}^f) = 0.5$,
$\assignment(v_{31}) = 0.5$,
$\assignment(a_2) = 0.5$. 
Variable $a_2$ is now too large, and so we have a final round of
pivot-and-update: $a_2$ is pivoted with $v_{22}^b$ and then updated back
to $0$. The final tableau and assignments are:

\noindent
\begin{minipage}{.35\textwidth}
\begin{align*}
&  v_{11} = v_{21}^b - a_1 \\
&  v_{22}^b = -v_{21}^b +  a_1  + a_2 \\
&  v_{21}^f = -v_{22}^f + v_{31} -a_3  \\
\end{align*}
\end{minipage}
\begin{minipage}{.50\textwidth}
\begin{center}
\begin{tabular}{l|ccccccccc}
  variable &
  $v_{11}$ &
  $v_{21}^b$ &
  $v_{21}^f$ &
  $v_{22}^b$ &
  $v_{22}^f$ &
  $v_{31}$ &
  $a_1$ &
  $a_2$ &
  $a_3$ 
\\
  \cline{2-10}
  lower bound &
  $0$ &
  $-\infty$ &
  $0$ &
  $-\infty$ &
  $0$ &
  $0.5$ &
  $0$ &
  $0$ &
  $0$ 
\\
  \cline{2-10}
  assignment &
  $0.5$ &
  $0.5$ &
  $0.5$ &
  $-0.5$ &
  $0$ &
  $0.5$ &
  $0$ &
  $0$ &
  $0$              
\\
  \cline{2-10}
  upper bound &
  $1$ &
  $\infty$ &
  $\infty$ &
  $\infty$ &
  $\infty$ &
  $1$ &
  $0$ &
  $0$ &
  $0$ 
\end{tabular}
\end{center}
\end{minipage}
 
\noindent
and the algorithm halts with the feasible solution it has found.
A key observation is that we did not
ever split on any of the ReLU connections.
Instead, it was sufficient to simply use updates to adjust the ReLU variables as needed.

\section{Efficiently Implementing Reluplex}
\label{sec:implementation}
 We next discuss three techniques that
significantly boost the performance of Reluplex: use of tighter bound
derivation,  conflict analysis
 and floating point arithmetic.
 A fourth technique, under-approximation, is discussed in
Section~\ref*{appendix:approximation} of the
appendix.

\subsubsection{Tighter Bound Derivation.}
The simplex and Reluplex procedures naturally lend themselves to
deriving tighter variable bounds as the search progresses~\cite{Ki14}.
Consider a basic variable
$x_i\in\basic{}$ and
 let
$
\posWeights(x_i) =\{x_j\notin\basic{}\ | \ T_{i,j}>0\}
$
 and
 $
\negWeights(x_i) =\{x_j\notin\basic{}\ | \ T_{i,j}<0\}.
$
Throughout the execution, the following rules can be used to derive 
tighter bounds for $x_i$, regardless of the current assignment:

\medskip
\noindent
\scalebox{0.9}{
\learnLB
\drule{
x_i\in \basic{},
\ \
\lb(x_i) <
  \sum_{x_j\in\posWeights(x_i)} T_{i,j} \cdot \lb(x_j)
+
\sum_{x_j\in\negWeights(x_i)} T_{i,j} \cdot \ub(x_j)
}
{
\lb(x_i) :=
 \sum_{x_j\in\posWeights(x_i)} T_{i,j} \cdot \lb(x_j)
+
\sum_{x_j\in\negWeights(x_i)} T_{i,j} \cdot\ub(x_j)
}
}

\medskip
\noindent
\scalebox{0.9}{
\learnUB
\drule{
x_i\in \basic{},
\ \
\ub(x_i) >
 \sum_{x_j\in\posWeights(x_i)} T_{i,j} \cdot\ub(x_j)
+
\sum_{x_j\in\negWeights(x_i)} T_{i,j} \cdot\lb(x_j)
}
{
\ub(x_i) :=
 \sum_{x_j\in\posWeights(x_i)} T_{i,j} \cdot\ub(x_j)
+
\sum_{x_j\in\negWeights(x_i)} T_{i,j} \cdot\lb(x_j)
}
}

\medskip
\noindent
The derived bounds can later be used to derive additional, tighter bounds.

When tighter bounds are derived for ReLU variables, these variables can sometimes
be eliminated, i.e., fixed to the active
or inactive state, without splitting. 
For a ReLU pair $x^f=\relu{}(x^b)$, 
 discovering that either $\lb{}(x^b)$ or $\lb{}(x^f)$
is strictly positive
means that in any
feasible solution this ReLU connection will be active.
Similarly, 
discovering
that $\ub(x^b)<0$ implies inactivity.
 
Bound tightening operations incur overhead, and simplex
implementations often use them sparsely~\cite{Ki14}. In Reluplex,
however, the benefits of eliminating ReLUs justify the cost. 
The actual amount of bound tightening to perform can be determined
heuristically; we describe the heuristic that we used in Section~\ref{sec:evaluation}.

\subsubsection{Derived Bounds and Conflict Analysis.}
Bound derivation can
lead to situations where we learn that
$\lb{}(x)>\ub{}(x)$
for some variable $x$. Such contradictions allow Reluplex
to immediately undo a previous split (or answer \unsat{} if
no previous splits exist).
However, in many cases more than just the previous split can be undone.
For example, if we have performed $8$ nested splits so far, 
it may be that the conflicting bounds for $x$ are the direct result of split
number $5$ but have only just been discovered.
 In this case we can immediately undo splits number 8, 7, and 6. This
 is a particular case of 
\emph{conflict analysis}, which is a 
standard technique in SAT and SMT solvers~\cite{MaSa99}.

\subsubsection{Floating Point Arithmetic.}

SMT solvers typically use precise (as opposed to floating point) arithmetic 
to avoid roundoff errors and guarantee soundness.
Unfortunately, precise computation is usually at least an order of
magnitude slower than its floating point equivalent.
Invoking Reluplex on a large DNN can require millions of 
pivot operations, each of which involves the multiplication
and division of rational numbers, potentially with large numerators or denominators --- making the use of floating point arithmetic
important for scalability.

There are standard techniques for keeping the
roundoff error small when implementing simplex using floating point,
which we incorporated into our implementation.
For example, one important practice is trying to avoid \pivot{} operations
involving the inversion of extremely small numbers~\cite{Va96}. 

 To provide increased confidence that any roundoff error remained
within an acceptable range,
we also added the following safeguards:
  \begin{inparaenum}[(i)]
\item After a certain number of \pivot{} steps we would measure the
   accumulated roundoff error; and
\item If the error exceeded a threshold $M$, we would \emph{restore} the coefficients of
  the current tableau $T$ using the initial tableau $T_0$.
\end{inparaenum}

Cumulative roundoff error can be measured by plugging the
current assignment values for the non-basic variables into the equations 
of the initial tableau $T_0$, 
 using them to calculate the values for every basic variable $x_i$, and then
 measuring by how much these values differ from the 
 current assignment $\assignment(x_i)$. We define the cumulative
 roundoff error as:
\[
 \sum_{x_i\in\basic{}_0} |\assignment(x_i) - \sum_{x_j\notin\basic_0}
 T_{0_{i,j}} \cdot \assignment(x_j)|
\]

 $T$ is restored by starting from 
 $T_0$
and performing a short series of \pivot{} steps that result in the same set of
basic variables as in $T$. 
In general, the shortest sequence of pivot steps to transform $T_0$ to $T$
is much shorter than the series of steps that was followed by
Reluplex --- and hence, although it is also performed using floating
point arithmetic,
it incurs a  
smaller roundoff error. 

The tableau restoration technique serves to increase our confidence in
the algorithm's results when using floating point arithmetic, but it
does not guarantee soundness. Providing true soundness when using
floating point arithmetic remains a future goal (see
Section~\ref{sec:conclusion}).

\section{Case Study: The ACAS Xu System}
\label{sec:acasxu}

Airborne collision avoidance systems are critical for ensuring the
safe operation of aircraft. The \emph{Traffic Alert and Collision Avoidance System}
(\emph{TCAS}) was developed in response to midair collisions between
commercial aircraft, and is currently mandated on all 
large commercial aircraft worldwide~\cite{Kuchar2007}. 
Recent work has focused on creating
a new system, known as \emph{Airborne Collision Avoidance System X}
(\emph{ACAS X})~\cite{DMU2015,Kochenderfer2011atc371}. This system adopts an approach that involves solving
a partially observable Markov decision process to optimize the
alerting logic and further reduce the probability of midair
collisions, while minimizing unnecessary
alerts~\cite{DMU2015,Kochenderfer2011atc371,Kochenderfer2012lljournal}.   

The unmanned variant of ACAS X, known as ACAS Xu, produces horizontal
maneuver advisories. So far, development of ACAS Xu has focused on using a 
large lookup table that maps sensor measurements to advisories~\cite{JuLoBrOwKo16}. However, this 
table requires over 2GB of memory. There is concern about the memory requirements
for certified avionics hardware. To
overcome this challenge, a DNN representation was explored as a potential replacement for the 
table~\cite{JuLoBrOwKo16}. Initial results show a dramatic reduction
in memory requirements without compromising safety. In fact, due to
its continuous nature, the DNN approach can sometimes outperform 
the discrete lookup table~\cite{JuLoBrOwKo16}. 
Recently, in order to reduce lookup time, the DNN approach was
improved further, and the single DNN was replaced by an array of 45 DNNs.
As a result, the original 2GB table can now be substituted with
 efficient DNNs that require less than 3MB of memory. 

A DNN implementation of ACAS Xu presents new certification challenges.
Proving that a set of inputs cannot
produce an erroneous alert is paramount for certifying the system for
use in safety-critical settings.
 Previous certification methodologies
 included exhaustively testing the system in 1.5 million simulated
 encounters~\cite{Kochenderfer2010jgcd}, but this is insufficient for proving that faulty behaviors do not exist
within the continuous DNNs. 
This highlights the need for verifying DNNs and
makes the ACAS Xu DNNs prime candidates on which 
to apply Reluplex.

\subsubsection{Network Functionality.} 
The ACAS Xu system maps input variables to action advisories. Each advisory is assigned a score,
with the lowest score corresponding to the best action.
The input state is composed of seven dimensions (shown
in Fig.~\ref{geo_horizontal}) which represent information 
determined from sensor measurements~\cite{Kochenderfer2011atc371}: 
\begin{inparaenum}[(i)]
	\item $\rho$: Distance from ownship to intruder;
	\item $\theta$: Angle to intruder relative to ownship heading direction;
	\item $\psi$: Heading angle of intruder relative to ownship heading direction;
	\item $v_\text{own}$: Speed of ownship;
	\item $v_\text{int}$: Speed of intruder;
	\item $\tau$: Time until loss of vertical separation; and
	\item $a_\text{prev}$: Previous advisory.
\end{inparaenum}
There are five outputs which represent the different horizontal
advisories that can be given to the ownship: Clear-of-Conflict (COC),
weak right, strong right, weak left, or strong left.
Weak and strong mean heading rates of \SI{1.5}{\degree\per\second} and \SI{3.0}{\degree\per\second}, respectively.

\begin{figure}[htp]
  	\centering
	\scalebox{0.8}{
          \includegraphics{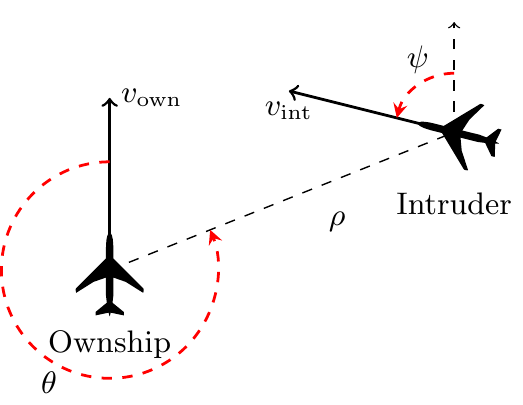}
	}
	\caption{Geometry for ACAS Xu Horizontal Logic Table}
	\label{geo_horizontal}
\end{figure}

The array of 45 DNNs was produced by discretizing $\tau$ and
$a_\text{prev}$, and producing a network for each discretized combination.
Each of these networks thus has five inputs (one for each of the other
dimensions) and five outputs. The DNNs are fully connected, use ReLU activation
functions, and have 6 hidden layers with
a total of 300 ReLU nodes each.

\subsubsection{Network Properties.}
It is desirable to verify that the ACAS Xu networks assign correct scores to
the output advisories in various input domains. 
Fig.~\ref{fig_headon} illustrates this kind of property by showing a
top-down view of a head-on encounter scenario, in which each pixel is colored to represent the best action
if the intruder were at that location. 
We expect the DNN's advisories to be consistent in each of these
regions; however, Fig.~\ref{fig_headon} was generated from a finite
set of input samples, and there may exist other inputs for which a wrong
advisory is produced, possibly leading to collision. Therefore, we
used Reluplex to prove properties from the following categories on the
DNNs:
\begin{inparaenum}[(i)]
	\item The system does not give unnecessary turning advisories;
	\item Alerting regions are uniform and do not contain
          inconsistent alerts; and
	\item Strong alerts do not appear for high $\tau$ values.
\end{inparaenum}

\begin{figure}[htp]
	\centering
	\scalebox{0.8}{
          \includegraphics{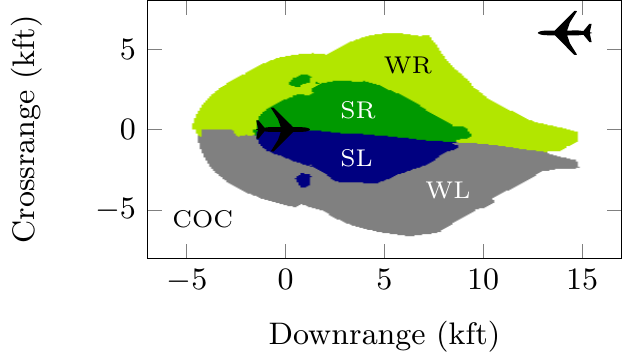}
	}
	\caption{Advisories for a head-on encounter with $a_\text{prev} = \text{COC},\tau=\SI{0}{\second}$.}
	\label{fig_headon}
\end{figure}

\section{Evaluation}
\label{sec:evaluation}
We used a proof-of-concept implementation of Reluplex 
to check realistic properties on the 45 ACAS Xu DNNs.
Our implementation consists of three main logical components:
\begin{inparaenum}[(i)]
\item A simplex engine
  for providing core
  functionality such as tableau representation and pivot and update
  operations;
\item
  A Reluplex engine for driving the search and performing bound derivation, ReLU pivots and
  ReLU updates; and
\item 
  A simple SMT core for providing splitting-on-demand services.
\end{inparaenum}
For the simplex engine we used the GLPK open-source LP
solver\footnote{\url{www.gnu.org/software/glpk/}} with some modifications, 
for instance in order to allow the Reluplex core to perform
bound tightening on tableau equations
calculated by GLPK. Our implementation, together with the experiments
described in this section, is available online~\cite{reluplexCode}.

Our search strategy was to repeatedly fix any out-of-bounds violations first,
and only then correct any violated ReLU constraints (possibly introducing
new out-of-bounds violations). We performed bound
tightening on the entering variable after every pivot
operation,
and performed a more thorough bound tightening
on all the equations in the tableau
 once every
 few thousand pivot steps. 
 Tighter bound derivation proved extremely useful, and we often
observed that after splitting on about 10\% of the
ReLU variables it led to the elimination of all remaining ReLUs.
We counted the number of times a ReLU pair was
fixed via
\updateb{} or \updatef{} or pivoted via \pivotForRelu{}, and
split only when this number reached 5 (a number empirically determined to work
well).
We also implemented conflict analysis and back-jumping.
Finally, we checked the
 accumulated roundoff error
(due to the use of double-precision floating point arithmetic)
 after every $5000$ \pivot{} steps, and
 restored the tableau if the error exceeded $10^{-6}$. Most 
 experiments described below required two tableau restorations or fewer.

We began by comparing our implementation of Reluplex to state-of-the-art solvers: the CVC4,
Z3, Yices and MathSat SMT solvers and the Gurobi LP solver (see Table~\ref{table:smtSolvers}). 
We ran all solvers with a 4 hour timeout on 2 of the ACAS Xu networks
(selected arbitrarily),
trying to solve for 8 
simple satisfiable properties $\varphi_1,\ldots,\varphi_8$, each of
the form $x\geq c$ for a fixed output
variable $x$ and a constant $c$. 
The SMT solvers generally performed poorly, with only Yices and
MathSat successfully solving two instances each. 
We attribute the results to these solvers' lack of direct support for
encoding ReLUs, and to their use of precise arithmetic. Gurobi 
solved 3 instances quickly, but timed out on all the rest. 
Its logs indicated that whenever Gurobi could solve the problem
without case-splitting, it did so quickly; but whenever the problem required
case-splitting, Gurobi would time out. Reluplex was able to solve all
8 instances. 
See Section~\ref*{appendix:encoding} of the
appendix
for the SMT and LP encodings that we used.

\begin{table}
\centering
\caption{
Comparison to SMT and LP solvers. Entries indicate solution time (in seconds).
}
\begin{tabular}[htp]{cc|crlrlrlrlrlrlrlr}
  \toprule
  &&& \multicolumn{1}{c}{$\varphi_1$} 
  && \multicolumn{1}{c}{$\varphi_2$}   
  && \multicolumn{1}{c}{$\varphi_3$}   
  && \multicolumn{1}{c}{$\varphi_4$}   
  && \multicolumn{1}{c}{$\varphi_5$}   
  && \multicolumn{1}{c}{$\varphi_6$}   
  && \multicolumn{1}{c}{$\varphi_7$}   
  && \multicolumn{1}{c}{$\varphi_8$}   
    \\
  \midrule
  CVC4     &&& -    && -    && - && - && -  && - && - && - \\
  Z3       &&& -    && -    && - && - && -  && - && - && - \\
  Yices    &&& 1    && 37   && - && - && -  && - && - && - \\
  MathSat  &&& 2040 && 9780 && - && - && -  && - && - && - \\
  Gurobi   &&& 1    && 1    && 1 && - && -  && - && - && - \\
  Reluplex &&& 8    && 2    && 7 && 7 && 93 && 4 && 7 && 9 \\
  \bottomrule
\end{tabular}%
\label{table:smtSolvers}
\end{table}%

Next, we used Reluplex to test a set of 10 quantitative properties
$\phi_1,\ldots,\phi_{10}$. The properties, described
 below, are formally defined in 
Section~\ref*{appendix:properties} of the
appendix.
Table~\ref{table:acasProperties} depicts for each property the number
of tested networks (specified as part of the property), the test
results and the total duration
(in seconds). The \emph{Stack} and \emph{Splits} columns list the maximal depth of
nested case-splits reached (averaged over the tested networks)
and the total number of case-splits performed, respectively.
For each property, we looked for an input that would violate it;
thus, an \unsat{} result indicates that a property
holds, and a \sat{} result indicates that it does not hold.
In the \sat{} case, the 
satisfying assignment is an example of an input that violates the property.

\begin{table}
\centering
\caption{
Verifying properties of the ACAS Xu networks.
}
\begin{tabular}[htp]{lc|crlclclrlr}
  \toprule
  &&& 
     \multicolumn{1}{c}{Networks}
  && 
     \multicolumn{1}{c}{Result}   
  && 
     \multicolumn{1}{c}{Time}
  && 
     \multicolumn{1}{c}{Stack}
  && 
     \multicolumn{1}{c}{Splits}
  \\
  \midrule
  $\phi_1$     &&& 41 && \unsat{} && 394517 && 47 && 1522384 \\
               &&& 4 && \timeout{} &&  &&  &&  \\
  $\phi_2$     &&& 1 && \unsat{} && 463 && 55 && 88388  \\
               &&& 35 && \sat{} && 82419 && 44 && 284515  \\
  $\phi_3$     &&& 42 && \unsat{}  && 28156 && 22 && 52080 \\
  $\phi_4$     &&& 42 && \unsat{}  && 12475 && 21 && 23940 \\
  $\phi_5$     &&& 1 &&  \unsat{}  && 19355 && 46 && 58914 \\
  $\phi_6$     &&& 1 &&  \unsat{}  && 180288 && 50 && 548496 \\
  $\phi_7$     &&& 1 && \timeout{}   &&  &&  &&  \\
  $\phi_8$     &&& 1 && \sat{} && 40102 && 69 && 116697 \\
  $\phi_9$     &&& 1 && \unsat{} && 99634 && 48 && 227002 \\
  $\phi_{10}$    &&& 1 && \unsat{} && 19944 && 49 && 88520 \\
  \bottomrule
\end{tabular}%
\label{table:acasProperties}
\end{table}%

Property $\phi_1$ states that if the intruder is distant and is
significantly slower than the ownship, the score of a COC advisory will
always be below a certain fixed threshold (recall that the best action
has the lowest score). Property $\phi_2$ states
that under similar conditions, the score for COC can never be maximal,
meaning that it can never be the worst action to take.
This property was discovered not to hold for 35 networks, 
but this was later determined to be acceptable behavior:
the DNNs 
have a strong bias for producing the same advisory they
had previously produced, and this
can result in advisories other than COC even for far-away intruders
if the previous advisory was also something other than COC.
Properties $\phi_3$ and $\phi_4$ deal with situations where the
intruder is directly ahead of the ownship, and state that
the DNNs will never issue a COC advisory.

Properties $\phi_5$ through $\phi_{10}$ each involve a single network, and
check for consistent behavior in a specific input region. 
For example, $\phi_5$ states that if the intruder is near and approaching from the
left, the network advises ``strong right''.
Property $\phi_7$, on which we timed out,
 states that when the vertical separation is large
the network will never advise a strong turn. The large input
domain and the particular network proved difficult to verify. Property
$\phi_8$ states that for a large vertical separation and a previous
``weak left'' advisory, the network will either output COC or
continue advising ``weak left''. Here, we were
able to find a counter-example, exposing an input on which the DNN
was inconsistent with the lookup table. 
This confirmed the existence of a discrepancy that had also been seen in
simulations, and which
 will be addressed by retraining the DNN.
We observe that for all 
properties, the maximal depth of nested splits was always well below the total
number of ReLU nodes, 300, 
illustrating the fact that Reluplex did not split on many of
them. Also, the total number of case-splits indicates that
large portions of the search space were pruned.

Another class of properties that we tested is \emph{adversarial
  robustness} properties. DNNs have been shown
to be susceptible to adversarial inputs~\cite{SzZaSuBrErGoFe13}: 
correctly classified inputs that an adversary
slightly perturbs, leading to their misclassification by the
network. 
 Adversarial robustness is thus a safety
consideration, and adversarial inputs can be used to train the
network further, making it more robust~\cite{GoShSz14}.
There exist approaches for finding adversarial
inputs~\cite{GoShSz14,BaIoLaVyNoCr16}, but the ability to verify their
absence is limited.

We say that a network is $\delta$\emph{-locally-robust} at input
point $\vec{x}$ if for every $\vec{x'}$ such that $\| \vec{x} -
\vec{x'}\|_{\infty}\leq\delta$, the network assigns the same label to $\vec{x}$ 
and $\vec{x'}$.  
In the case of the ACAS Xu DNNs, this means that the same output has the lowest
score for both $\vec{x}$ and $\vec{x'}$.
Reluplex can be used 
to prove local robustness for a given $\vec{x}$ and $\delta$, as
depicted in Table~\ref{table:localRobustness}.
We used one of the ACAS Xu networks, and tested
combinations of 5 arbitrary points and 5 values of
$\delta$. \sat{} results show that 
Reluplex found an adversarial input within the prescribed
neighborhood, and \unsat{} results indicate
that no such inputs exist. 
Using binary search on values of $\delta$,
Reluplex can thus be used for approximating the optimal $\delta$
value up to a desired precision:
 for example, for point 4 the optimal $\delta$ is between $0.025$ and
 $0.05$.
It is expected that different input points will have different
local robustness, and the acceptable thresholds will thus need to be
set individually.

\begin{table}[t]
\centering
\caption{
Local adversarial robustness tests. All times are in seconds.
}
\scalebox{0.98}{
\begin{tabular}[htp]{cc|clrclrclrclrclrc|cr}
  \toprule
  &&&   
  \multicolumn{2}{c}{$\delta = 0.1$} 
  &&  
  \multicolumn{2}{c}{$\delta = 0.075$} 
  &&  
  \multicolumn{2}{c}{$\delta = 0.05$} 
  &&  
  \multicolumn{2}{c}{$\delta = 0.025$} 
  &&  
  \multicolumn{2}{c}{$\delta = 0.01$} 
  &&& 
  \multicolumn{1}{c}{Total}
  \\
  \cline{4-5} \cline{7-8} \cline{10-11} \cline{13-14} \cline{16-17}
  &&&   
      Result & Time
  &&  
      Result & Time
  &&  
      Result & Time
  &&  
      Result & Time
  &&  
      Result & Time
  &&&  
  \multicolumn{1}{c}{Time}
  \\
  \midrule
  Point 1 &&& 
  \sat{} & 135 
  && 
  \sat{} & 239
  && 
  \sat{} & 24
  && \unsat{} & 609
  && \unsat{} & 57
  &&& 1064
  \\

  Point 2 &&& 
  \unsat{} & 5880
  && 
  \unsat{} & 1167
  && 
  \unsat{} & 285
  && \unsat{} & 57
  && \unsat{} & 5
  &&& 7394
  \\

  Point 3 &&& 
  \unsat{} & 863
  && 
  \unsat{} & 436
  && 
  \unsat{} & 99
  && \unsat{} & 53
  && \unsat{} & 1
  &&& 1452
  \\

  Point 4 &&& 
  \sat{} & 2   
  && 
  \sat{} & 977
  && 
  \sat{} & 1168 
  && \unsat{} & 656
  && \unsat{} & 7 
  &&& 2810
  \\

  Point 5 &&& 
  \unsat{} & 14560   
  && 
  \unsat{} & 4344
  && 
  \unsat{} & 1331
  && \unsat{} & 221
  && \unsat{} & 6
  &&& 20462
  \\
  \bottomrule
\end{tabular}%
}
\label{table:localRobustness}
\end{table}%

Finally, we mention an additional variant of adversarial robustness
which we term \emph{global adversarial robustness}, and which can also be
solved by Reluplex.
Whereas local adversarial robustness is measured for a specific
$\vec{x}$, global adversarial robustness
applies to all inputs simultaneously.
This is expressed by encoding two side-by-side copies of the DNN
in question, $N_1$ and $N_2$, operating on separate input variables
$\vec{x_1}$ and $\vec{x_2}$, respectively,
such that $\vec{x_2}$ represents an adversarial perturbation of
$\vec{x_1}$.
We can then check whether 
$\| \vec{x_1} - \vec{x_2}\|_{\infty}\leq \delta$ implies that the two copies of the DNN
produce similar outputs.
Formally, we require that 
if $N_1$ and $N_2$ assign output $a$
values $p_1$ and $p_2$ respectively, then $|p_1-p_2|\leq \epsilon$. 
If this holds for every output, we say that the network
is $\epsilon$-globally-robust.  Global
adversarial robustness is harder to prove than the local variant,
because encoding two copies of the network results in twice as many ReLU
nodes and because the problem is not restricted to a small input
domain.
 We were able to prove global adversarial robustness only on
small networks; improving the scalability of this technique is left
for future work.

\section{Related Work}
\label{sec:relatedWork}

In~\cite{PuTa10}, the authors propose an
approach for verifying properties of neural networks with
sigmoid activation functions. They replace the
activation functions with piecewise linear approximations thereof, and then
invoke black-box SMT solvers. When spurious
counter-examples are found,
the approximation is refined.
The authors highlight the difficulty in scaling-up
this technique, and are able to tackle only small networks with at
most 20 hidden nodes~\cite{PuTa12}.

The authors of~\cite{BaIoLaVyNoCr16} propose a technique for finding
local adversarial examples in DNNs with ReLUs. Given an input point $\vec{x}$,
they encode the problem as a linear program and invoke a black-box LP
solver. 
The activation function issue is circumvented by
considering a sufficiently small neighborhood of $\vec{x}$, in which
all ReLUs are fixed at the active or inactive state, 
 making the problem convex. Thus, it is unclear how to
address an $\vec{x}$ for which one or more ReLUs are
on the boundary between active and inactive states.
 In contrast, Reluplex can be used on input domains for which ReLUs can
 have more than one possible state.

In a recent paper~\cite{HuKwWaWu16}, the authors propose a method for proving
the local adversarial robustness of DNNs. 
For a specific
input point $\vec{x}$, the authors attempt to prove consistent labeling in a
neighborhood of $\vec{x}$ by means of discretization: they reduce
the infinite neighborhood into a finite set of points, and check that
the labeling of these points is consistent. This process is then propagated through the
network, layer by layer. While the technique is general in the sense
that it is not tailored for a specific activation function, the
discretization process means that any \unsat{} result only holds modulo
the assumption that the finite sets correctly represent their infinite domains.
In contrast, our technique can guarantee that there are no 
 irregularities hiding between the discrete points. 

Finally, in~\cite{JaGhKoGaScZaPl15}, the authors
 employ hybrid techniques to analyze an ACAS X controller given in
 lookup-table form,
 seeking to identify \emph{safe input regions} in which collisions
 cannot occur.
It will be interesting to combine our technique with
that of~\cite{JaGhKoGaScZaPl15}, in order to verify that following the
advisories provided by the DNNs indeed leads to collision avoidance.

\section{Conclusion and Next Steps}
\label{sec:conclusion}

We presented a novel decision algorithm for solving queries on
deep neural networks with ReLU activation functions. The
technique is based on extending the simplex algorithm to support
the non-convex ReLUs in a way that allows their inputs and outputs to be
temporarily inconsistent and
then fixed as the algorithm progresses.  To guarantee
termination, some ReLU connections may need to be split upon --- but in many
cases this is not required, resulting in an efficient solution. Our
success in verifying properties of the ACAS Xu networks
indicates that the technique holds much potential for verifying
real-world DNNs.

In the future, we plan to increase the technique's scalability.
 Apart from making engineering improvements to
our implementation, we plan to explore better
strategies for the application of the Reluplex rules,
and to employ advanced conflict analysis techniques for reducing the amount of
case-splitting required. Another direction is to provide better
soundness guarantees without harming performance, for example
by replaying floating-point solutions using 
 precise arithmetic~\cite{KiBaTi14}, or by producing
externally-checkable correctness proofs~\cite{KaBaTiReHa16}.
Finally, we plan to extend our approach to handle DNNs with additional
kinds of layers. We speculate that the mechanism we applied to ReLUs
can be applied to other piecewise linear layers, such as
max-pooling layers.
%

\subsubsection{Acknowledgements.} We thank 
Neal Suchy from the Federal Aviation
Administration, Lindsey Kuper from Intel 
and Tim King from Google for their valuable comments and support.
This work was partially supported by a
 grant from Intel.

{
\bibliographystyle{abbrv}

\begin{thebibliography}{10}

\bibitem{BaNiOlTi06}
C.~Barrett, R.~Nieuwenhuis, A.~Oliveras, and C.~Tinelli.
\newblock {Splitting On Demand in SAT Modulo Theories}.
\newblock In {\em Proc. 13th Int. Conf. on Logic for Programming, Artificial
  Intelligence, and Reasoning (LPAR)}, pages 512--526, 2006.

\bibitem{BSST09}
C.~Barrett, R.~Sebastiani, S.~Seshia, and C.~Tinelli.
\newblock {Satisfiability Modulo Theories}.
\newblock In A.~Biere, M.~J.~H. Heule, H.~van Maaren, and T.~Walsh, editors,
  {\em Handbook of Satisfiability}, volume 185 of {\em Frontiers in Artificial
  Intelligence and Applications}, chapter~26, pages 825--885. IOS Press, 2009.

\bibitem{BaIoLaVyNoCr16}
O.~Bastani, Y.~Ioannou, L.~Lampropoulos, D.~Vytiniotis, A.~Nori, and
  A.~Criminisi.
\newblock {Measuring Neural Net Robustness with Constraints}.
\newblock In {\em Proc. 30th Conf. on Neural Information Processing Systems
  (NIPS)}, 2016.

\bibitem{BoDeDwFiFlGoJaMoMuZhZhZhZi16}
M.~Bojarski, D.~Del~Testa, D.~Dworakowski, B.~Firner, B.~Flepp, P.~Goyal,
  L.~Jackel, M.~Monfort, U.~Muller, J.~Zhang, X.~Zhang, J.~Zhao, and K.~Zieba.
\newblock {End to End Learning for Self-Driving Cars}, 2016.
\newblock Technical Report. \url{http://arxiv.org/abs/1604.07316}.

\bibitem{Dantzig1963}
G.~Dantzig.
\newblock {\em {Linear Programming and Extensions}}.
\newblock Princeton University Press, 1963.

\bibitem{GlBoBe11}
X.~Glorot, A.~Bordes, and Y.~Bengio.
\newblock {Deep Sparse Rectifier Neural Networks}.
\newblock In {\em Proc. 14th Int. Conf. on Artificial Intelligence and
  Statistics (AISTATS)}, pages 315--323, 2011.

\bibitem{FoBeCu16}
I.~Goodfellow, Y.~Bengio, and A.~Courville.
\newblock {\em {Deep Learning}}.
\newblock MIT Press, 2016.

\bibitem{GoShSz14}
I.~Goodfellow, J.~Shlens, and C.~Szegedy.
\newblock {Explaining and Harnessing Adversarial Examples}, 2014.
\newblock Technical Report. \url{http://arxiv.org/abs/1412.6572}.

\bibitem{HiDeYuDaMoJaSeVaNgSaKi12}
G.~Hinton, L.~Deng, D.~Yu, G.~Dahl, A.~Mohamed, N.~Jaitly, A.~Senior,
  V.~Vanhoucke, P.~Nguyen, T.~Sainath, and B.~Kingsbury.
\newblock {Deep Neural Networks for Acoustic Modeling in Speech Recognition:
  The Shared Views of Four Research Groups}.
\newblock {\em IEEE Signal Processing Magazine}, 29(6):82--97, 2012.

\bibitem{HuKwWaWu16}
X.~Huang, M.~Kwiatkowska, S.~Wang, and M.~Wu.
\newblock {Safety Verification of Deep Neural Networks}, 2016.
\newblock Technical Report. \url{http://arxiv.org/abs/1610.06940}.

\bibitem{JaKaLe09}
K.~Jarrett, K.~Kavukcuoglu, and Y.~LeCun.
\newblock {What is the Best Multi-Stage Architecture for Object Recognition?}
\newblock In {\em Proc. 12th IEEE Int. Conf. on Computer Vision (ICCV)}, pages
  2146--2153, 2009.

\bibitem{JaGhKoGaScZaPl15}
J.-B. Jeannin, K.~Ghorbal, Y.~Kouskoulas, R.~Gardner, A.~Schmidt, E.~Zawadzki,
  and A.~Platzer.
\newblock {A Formally Verified Hybrid System for the Next-Generation Airborne
  Collision Avoidance System}.
\newblock In {\em Proc. 21st Int. Conf. on Tools and Algorithms for the
  Construction and Analysis of Systems (TACAS)}, pages 21--36, 2015.

\bibitem{JuLoBrOwKo16}
K.~Julian, J.~Lopez, J.~Brush, M.~Owen, and M.~Kochenderfer.
\newblock {Policy Compression for Aircraft Collision Avoidance Systems}.
\newblock In {\em Proc. 35th Digital Avionics Systems Conf. (DASC)}, pages
  1--10, 2016.

\bibitem{reluplexCode}
G.~Katz, C.~Barrett, D.~Dill, K.~Julian, and M.~Kochenderfer.
\newblock {Reluplex}, 2017.
\newblock \url{https://github.com/guykatzz/ReluplexCav2017}.

\bibitem{KaBaTiReHa16}
G.~Katz, C.~Barrett, C.~Tinelli, A.~Reynolds, and L.~Hadarean.
\newblock {Lazy Proofs for DPLL(T)-Based SMT Solvers}.
\newblock In {\em Proc. 16th Int. Conf. on Formal Methods in Computer-Aided
  Design (FMCAD)}, pages 93--100, 2016.

\bibitem{Ki14}
T.~King.
\newblock {\em {Effective Algorithms for the Satisfiability of Quantifier-Free
  Formulas Over Linear Real and Integer Arithmetic}}.
\newblock PhD Thesis, 2014.

\bibitem{KiBaTi14}
T.~King, C.~Barret, and C.~Tinelli.
\newblock {Leveraging Linear and Mixed Integer Programming for SMT}.
\newblock In {\em Proc. 14th Int. Conf. on Formal Methods in Computer-Aided
  Design (FMCAD)}, pages 139--146, 2014.

\bibitem{DMU2015}
M.~Kochenderfer.
\newblock {\em Decision Making Under Uncertainty: Theory and Application},
  chapter Optimized Airborne Collision Avoidance, pages 259--276.
\newblock MIT, 2015.

\bibitem{Kochenderfer2011atc371}
M.~Kochenderfer and J.~Chryssanthacopoulos.
\newblock {Robust Airborne Collision Avoidance through Dynamic Programming}.
\newblock Project Report ATC-371, Massachusetts Institute of Technology,
  Lincoln Laboratory, 2011.

\bibitem{Kochenderfer2010jgcd}
M.~Kochenderfer, M.~Edwards, L.~Espindle, J.~Kuchar, and J.~Griffith.
\newblock {Airspace Encounter Models for Estimating Collision Risk}.
\newblock {\em AIAA Journal on Guidance, Control, and Dynamics},
  33(2):487--499, 2010.

\bibitem{Kochenderfer2012lljournal}
M.~Kochenderfer, J.~Holland, and J.~Chryssanthacopoulos.
\newblock {Next Generation Airborne Collision Avoidance System}.
\newblock {\em Lincoln Laboratory Journal}, 19(1):17--33, 2012.

\bibitem{KrSuHi12}
A.~Krizhevsky, I.~Sutskever, and G.~Hinton.
\newblock {Imagenet Classification with Deep Convolutional Neural Networks}.
\newblock {\em Advances in Neural Information Processing Systems}, pages
  1097--1105, 2012.

\bibitem{Kuchar2007}
J.~Kuchar and A.~Drumm.
\newblock {The Traffic Alert and Collision Avoidance System}.
\newblock {\em Lincoln Laboratory Journal}, 16(2):277--296, 2007.

\bibitem{MaHaNg13}
A.~Maas, A.~Hannun, and A.~Ng.
\newblock {Rectifier Nonlinearities improve Neural Network Acoustic Models}.
\newblock In {\em Proc. 30th Int. Conf. on Machine Learning (ICML)}, 2013.

\bibitem{MaSa99}
J.~Marques-Silva and K.~Sakallah.
\newblock {GRASP: A Search Algorithm for Propositional Satisfiability}.
\newblock {\em IEEE Transactions on Computers}, 48(5):506--521, 1999.

\bibitem{NaHi10}
V.~Nair and G.~Hinton.
\newblock {Rectified Linear Units Improve Restricted Boltzmann Machines}.
\newblock In {\em Proc. 27th Int. Conf. on Machine Learning (ICML)}, pages
  807--814, 2010.

\bibitem{NiOlTi06DPLLT}
R.~Nieuwenhuis, A.~Oliveras, and C.~Tinelli.
\newblock {Solving SAT and SAT Modulo Theories: From an abstract
  Davis--Putnam--Logemann--Loveland procedure to DPLL(\emph{T})}.
\newblock {\em Journal of the ACM (JACM)}, 53(6):937--977, 2006.

\bibitem{PaRi91}
M.~Padberg and G.~Rinaldi.
\newblock {A Branch-And-Cut Algorithm for the Resolution of Large-Scale
  Symmetric Traveling Salesman Problems}.
\newblock {\em IEEE Transactions on Computers}, 33(1):60--100, 1991.

\bibitem{PuTa10}
L.~Pulina and A.~Tacchella.
\newblock {An Abstraction-Refinement Approach to Verification of Artificial
  Neural Networks}.
\newblock In {\em Proc. 22nd Int. Conf. on Computer Aided Verification (CAV)},
  pages 243--257, 2010.

\bibitem{PuTa12}
L.~Pulina and A.~Tacchella.
\newblock {Challenging SMT Solvers to Verify Neural Networks}.
\newblock {\em AI Communications}, 25(2):117--135, 2012.

\bibitem{RiTo99}
M.~Riesenhuber and P.~Tomaso.
\newblock {Hierarchical Models of Object Recognition in Cortex}.
\newblock {\em Nature Neuroscience}, 2(11):1019--1025, 1999.

\bibitem{SiHuMaGuSiVaScAnPaLaDi16}
D.~Silver, A.~Huang, C.~Maddison, A.~Guez, L.~Sifre, G.~Van Den~Driessche,
  J.~Schrittwieser, I.~Antonoglou, V.~Panneershelvam, M.~Lanctot, and
  S.~Dieleman.
\newblock {Mastering the Game of Go with Deep Neural Networks and Tree Search}.
\newblock {\em Nature}, 529(7587):484--489, 2016.

\bibitem{SzZaSuBrErGoFe13}
C.~Szegedy, W.~Zaremba, I.~Sutskever, J.~Bruna, D.~Erhan, I.~Goodfellow, and
  R.~Fergus.
\newblock {Intriguing Properties of Neural Networks}, 2013.
\newblock Technical Report. \url{http://arxiv.org/abs/1312.6199}.

\bibitem{Va96}
R.~Vanderbei.
\newblock {\em {Linear Programming: Foundations and Extensions}}.
\newblock Springer, 1996.

\end{thebibliography}

}

\renewcommand{\thesection}{\Roman{section}} 
\renewcommand{\thesubsection}{\Roman{subsection}} 
\setcounter{section}{0}

\newpage
\noindent
{\huge Appendix: Supplementary Material}

\section{Verifying Properties in DNNs with ReLUs is NP-Complete}
\label{appendix:npc}

Let $N$ be a DNN with ReLUs and
let $\varphi$ denote a property that is a conjunction of linear
constraints on the inputs $\vec{x}$ and outputs $\vec{y}$ of $N$, i.e. $\varphi =
\varphi_1(\vec{x})\wedge\varphi_2(\vec{y})$.
We say that $\varphi$ \emph{is satisfiable on} $N$ if 
there exists an assignment $\assignment$ for
the variables $\vec{x}$ and $\vec{y}$ such that $\assignment(\vec{y})$ is the result
of propagating $\assignment(\vec{x})$ through $N$ and $\assignment$ satisfies $\varphi$.

\begin{claim}
The problem of determining whether $\varphi$ is satisfiable on $N$ for a given DNN
$N$ and a property $\varphi$ is NP-complete.
\end{claim}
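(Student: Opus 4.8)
The statement asserts NP-completeness, so the plan splits into a membership argument and a hardness argument, handled in that order.

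\emph{Membership in NP.} First I would argue the problem lies in NP, taking the input assignment $\assignment(\vec{x})$ itself as the certificate. Given $\assignment(\vec{x})$, the values of all internal nodes and of $\vec{y}$ are computed deterministically by a single forward pass, each node requiring one linear combination and one application of $\relu$; this is polynomial in the size of $N$, after which checking the linear constraints of $\varphi$ is immediate. The only delicate point is certificate \emph{size}: I would show that whenever $\varphi$ is satisfiable on $N$ there is a satisfying input of polynomial bit-length. To see this, fix the active/inactive phase of every ReLU according to some satisfying assignment; this turns the network into an affine map and turns the whole problem into a rational linear-feasibility system (the ReLU equalities, the sign constraints witnessing each chosen phase, and $\varphi$ itself). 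Since this system is feasible, standard linear-programming theory (a basic feasible solution, bounded via Cramer's rule) yields a rational solution of polynomial bit-length, which serves as the certificate.

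\emph{Hardness.} For NP-hardness I would reduce from 3-SAT. Given a formula $\phi$ over variables $x_1,\dots,x_n$ with clauses $C_1,\dots,C_k$, I build a DNN with inputs $t_1,\dots,t_n$ and the input constraint $\varphi_1 := \bigwedge_i 0\le t_i\le 1$. The heart of the reduction is a \emph{gadget that forces integrality}: for each $i$ the network computes $m_i = \relu(t_i - \tfrac12) + \relu(\tfrac12 - t_i)$, which equals $|t_i-\tfrac12|$ and therefore satisfies $m_i\le\tfrac12$ on $[0,1]$, with equality exactly at $t_i\in\{0,1\}$. Making each $m_i$ an output and demanding $m_i\ge\tfrac12$ thus pins every $t_i$ to a genuine Boolean value. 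Interpreting $t_i$ as the truth value of $x_i$ and $1-t_i$ as that of $\bar x_i$, I then emit, for each clause $C_j$, an output $s_{C_j}$ equal to the sum of the truth values of its three literals (an affine combination, routed to the output layer via pass-through nodes $\relu(t_i)=t_i$), and add the constraint $s_{C_j}\ge 1$. Collecting these gives $\varphi_2 := \bigwedge_i (m_i\ge\tfrac12) \wedge \bigwedge_j (s_{C_j}\ge 1)$; the construction uses a constant number of layers, $O(n)$ ReLU nodes, and is clearly polynomial.

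It then remains to verify correctness, i.e.\ that $\varphi_1\wedge\varphi_2$ is satisfiable on $N$ iff $\phi$ is satisfiable. For one direction, any feasible assignment has $m_i=\tfrac12$, hence $t_i\in\{0,1\}$; reading off $x_i\mapsto(t_i=1)$, the now-integer constraints $s_{C_j}\ge 1$ force each clause to contain a true literal, so $\phi$ holds. Conversely, a satisfying Boolean assignment lifted to $t_i\in\{0,1\}$ yields $m_i=\tfrac12$ and $s_{C_j}\ge1$. I expect the main obstacle to be precisely the danger that the \emph{continuous relaxation admits spurious solutions}---a fractional $t_i$ satisfying the real-valued clause constraints even when $\phi$ is unsatisfiable; the tent-map gadget $|t_i-\tfrac12|$ is exactly what rules this out, since $\relu$ is affine on $[0,1]$ and placing a kink at $\tfrac12$ is the only way to build a nonlinear integrality test from $\relu$. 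A secondary point to check is the output-layer $\relu$ of the model of Section~\ref{sec:background}: since every output quantity ($m_i$ and $s_{C_j}$) is nonnegative, the final $\relu$ acts as the identity and does not disturb the encoding.
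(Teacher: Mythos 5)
Your proof is correct, but the hardness reduction is genuinely different from the paper's. The paper also reduces from 3-SAT, but it pushes the Boolean logic \emph{into the network}: a per-clause ``disjunction gadget'' computes $y_i = 1 - \max(0,\, 1 - \sum_j q_i^j)$ so that multiple true literals still saturate to exactly $1$, a negation gadget computes $1-x_j$, and a conjunction gadget sums the $y_i$ and constrains the sum to equal $n$. Since that construction presupposes $\{0,1\}$-valued inputs, the paper then has to do extra work to live with the continuous input domain: it adds, for each input, an auxiliary gadget computing $\max(0,\epsilon - x) + \max(0, x-1+\epsilon)$ with output constrained to $[0,\epsilon]$, forcing $x\in[0,\epsilon]\cup[1-\epsilon,1]$, and it re-derives the clause and conjunction bounds with $\epsilon$-tolerances (requiring $y\in[n(1-\epsilon),n]$ and $\epsilon < \frac{1}{n+3}$). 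Your tent-map gadget $m_i = \relu(t_i-\tfrac12)+\relu(\tfrac12-t_i)$ with the output constraint $m_i\ge\tfrac12$ forces $t_i\in\{0,1\}$ \emph{exactly}, which lets you keep the clause check as a plain linear sum $s_{C_j}\ge 1$ and dispenses with the entire $\epsilon$-perturbation analysis; this is a cleaner and shorter argument that proves the same thing. Your handling of the output-layer \relu{} (all emitted quantities are nonnegative, so it is the identity) is a detail the paper's gadgets also rely on implicitly. On the membership side you are in fact more careful than the paper: the paper simply declares the input assignment to be a checkable witness, whereas you additionally justify that a satisfying input of polynomial bit-length exists by fixing the ReLU phases and appealing to a rational basic feasible solution of the resulting linear system --- a point the paper glosses over but which is needed for a fully rigorous NP-membership claim.
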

\begin{proof}
We first show that the problem is in NP.
A satisfiability witness is simply an assignment $\assignment(\vec{x})$ for the input
variables $\vec{x}$.
 This witness can be checked by feeding the values for the
input variables forward through the network, obtaining the assignment
$\assignment(\vec{y})$ for the output
values, and checking whether $\varphi_1(\vec{x})\wedge\varphi_2(\vec{y})$ holds under the
assignment $\assignment$.

Next, we show that the problem is NP-hard, using a reduction from the
3-SAT problem. We will show how any 3-SAT formula $\psi$ can be transformed
into a DNN with ReLUs $N$ and a property $\varphi$, such that $\varphi$ is
satisfiable on $N$ if and only
if $\psi$ is satisfiable.

Let $\psi = C_1\wedge C_2\wedge\ldots\wedge C_n$ denote
a 3-SAT formula over variable set $X = \{x_1,\ldots, x_k\}$, i.e. each $C_i$
is a disjunction of three literals $q_i^1 \vee q_i^2 \vee q_i^3$ where
the $q$'s are variables from $X$ or their negations. The question is to
determine whether there exists an assignment $a:X\rightarrow \{0,1\}$
that satisfies $\psi$, i.e. that satisfies all the clauses simultaneously.

For simplicity, we first show the construction assuming that the input
nodes take the discrete values $0$ or $1$. Later we will explain how
this limitation can be relaxed, so that the only limitation on the
input nodes is that they be in the range $[0,1]$.

We begin by introducing the \emph{disjunction gadget} which, given
nodes $q_1,q_2,q_3\in\{0,1\}$, outputs a node $y_i$ that is $1$ if
$q_1+ q_2+ q_3\geq 1$ and $0$ otherwise. The gadget is shown below for the case that
the  $q_i$ literals are all variables (i.e. not negations of variables):
\begin{figure}[H]
\centering
\begin{tikzpicture}[shorten >=1pt,->,draw=black!50, node distance=\layersep]
  \node[input neuron] (qi1)  {$q_i^1$};
  \node[input neuron, below = 0.5cm of qi1] (qi2)  {$q_i^2$};
  \node[input neuron, below = 0.5cm of qi2] (qi3)  {$q_i^3$};
  
  \node[hidden neuron, right = 1cm of qi2] (sumQ) {$t_i$};
  \coordinate (middle) at ($(qi2)!.5!(sumQ)$);
  \node[input neuron, below = 1.5cm of middle] (constantOne) {$1$};

  \path (qi1) edge[] node[above] {$-1$} (sumQ);
  \path (qi2) edge[] node[above] {$-1$} (sumQ);
  \path (qi3) edge[] node[above] {$-1$} (sumQ);
  \path (constantOne) edge[] node[above] {$1$} (sumQ);

  \node[hidden neuron, right = 1cm of sumQ] (sumQF) {};
  \draw[dashed] (sumQ) -- node[above] {ReLU} (sumQF);

  \node[output neuron, right = 1cm of sumQF] (result) {$y_i$};
  \path (sumQF) edge[] node[above] {$-1$} (result);
  \path (constantOne) edge[] node[above] {$1$} (result);
\end{tikzpicture}
\end{figure}
\noindent
The disjunction gadget 
can be regarded as calculating the expression
\[
y_i = 1 - \max{}(0, 1 - \sum_{j=1}^3q_i^j)
\]
If there is at least one input variable set to $1$, 
$y_i$ will be equal to $1$. If all
inputs are $0$, $y_i$ will be equal to $0$. The crux of this gadget is
 that the ReLU operator allows us to guarantee that
even if there are multiple inputs set to $1$, the output $y_i$ will
still be precisely $1$. 

In order to handle any negative literals $q_i^j\equiv\neg x_j$,
before feeding the literal into the disjunction gadget 
 we first use a \emph{negation gadget}:
\begin{figure}[H]
\centering
\begin{tikzpicture}[shorten >=1pt,->,draw=black!50, node distance=\layersep]
  \node[input neuron] (qi1)  {$x_j$};
  
  \node[hidden neuron, right = 1cm of qi1] (sumQ) {$q_i^j$};
  \node[input neuron, below = 0.5cm of qi1] (constantOne) {$1$};

  \path (qi1) edge[] node[above] {$-1$} (sumQ);
  \path (constantOne) edge[] node[above] {$1$} (sumQ);
\end{tikzpicture}
\end{figure}

\noindent
This gadget simply calculates $1-x_j$, and then we continue as before.

The last part of the construction involves a \emph{conjunction
  gadget}:
\begin{figure}[H]
\centering
\begin{tikzpicture}[shorten >=1pt,->,draw=black!50, node distance=\layersep]
  \node[output neuron] (y1) {$y_1$};
  \node[output neuron, below = 1.5cm of y1] (yn) {$y_n$};

  \node[] at ($(y1)!.5!(yn)$) (middle) {\vdots};

  \node[output neuron, right = 2.5cm of middle] (y) {$y$}; 

  \path (y1) edge[] node[above] {1} (y);
  \path (yn) edge[] node[above] {1} (y);
\end{tikzpicture}
\end{figure}
\noindent
Assuming all nodes $y_1,\ldots,y_n$ are in the domain $\{0,1\}$, we
require that node $y$ be in the range $[n,n]$. Clearly this holds only
if $y_i=1$ for all $i$.

Finally, in order to check whether all clauses $C_1,\ldots,C_n$ are
simultaneously satisfied, we construct a disjunction gadget for each of
the clauses (using negation gadgets for their inputs as needed), and
combine them using a conjunction gadget:
\begin{figure}[H]
\centering
\begin{tikzpicture}[shorten >=1pt,->,draw=black!50, node distance=\layersep]
  \node[input neuron] (x1)  {$x_1$};
  \node[input neuron, below = 0.5cm of x1] (x2)  {$x_2$};
  \node[below = 0.5cm of x2] (inputVdots)  {$\vdots$};
  \node[input neuron, below = 0.5cm of inputVdots] (xn)  {$x_n$};
  \node[input neuron, below = 0.5cm of xn] (constantOneInput)  {$1$};

  \coordinate (middleInput) at ($(x1)!.5!(constantOneInput)$);
  \coordinate (rightOfInputs) at ($(middleInput) + (2.5cm,0)$);

  \node[right = 2.5cm of rightOfInputs] () {$\vdots$};  

 
  \node[hidden neuron, above = 1.5cm of rightOfInputs] (sumQ1) {$t_1$};
  \node[hidden neuron, right = 1cm of sumQ1] (sumQF1) {};
  \draw[dashed] (sumQ1) -- node[above]{ReLU} (sumQF1);
  \node[output neuron, right = 1cm of sumQF1] (y1) {$y_1$};
  \path (sumQF1) edge[] node[above] {$-1$} (y1);

  \node[input neuron, below = 0.5cm of sumQF1] (constantOne1) {$1$};
  \path (constantOne1) edge[] node[above] {$1$} (y1);


  \node[hidden neuron, below = 1.0cm of rightOfInputs] (sumQn) {$t_n$};
  \node[hidden neuron, right = 1cm of sumQn] (sumQFn) {};
  \draw[dashed] (sumQn) -- node[above]{ReLU} (sumQFn);
  \node[output neuron, right = 1cm of sumQFn] (yn) {$y_n$};
  \path (sumQFn) edge[] node[above] {$-1$} (yn);

  \node[input neuron, below = 0.5cm of sumQFn] (constantOnen) {$1$};
  \path (constantOnen) edge[] node[above] {$1$} (yn);


  \node[output neuron, right = 5cm of rightOfInputs] (y) {$y$};  

  \path (y1) edge[] node[above] {1} (y);
  \path (yn) edge[] node[above] {1} (y);


  \draw[rounded corners]
  ($(middleInput) + (-0.7cm, -2.8cm ) $)
  rectangle 
  ($(middleInput) + (0.7cm, 2.8cm ) $);
  
  \path ($(middleInput)+(0.7cm,0.0cm)$) 
  edge[] node[above] {} (sumQ1);

  \path ($(middleInput)+(0.7cm,1.0cm)$)
  edge[] node[above] {} (sumQn);

  \path ($(middleInput)+(0.7cm,-1.0cm)$)
  edge[] node[above] {} (sumQn);

  \path ($(middleInput)+(0.7cm,-2.0cm)$)
  edge[] node[above] {} (sumQ1);

  \path ($(middleInput)+(0.7cm,-2.0cm)$)
  edge[] node[above] {} (sumQn);

  \path ($(middleInput)+(0.7cm,2.0cm)$)
  edge[] node[above] {} (sumQ1);
\end{tikzpicture}
\end{figure}
\noindent
where the input variables are mapped to each $t_i$ node  according to the
definition of clause $C_i$.
 As we discussed before, node $y_i$ will be equal
to $1$ if clause $C_i$ is satisfied, and will be $0$ otherwise.
Therefore, node $y$ will be in the range
$[n,n]$  if and only if all clauses are simultaneously
satisfied.
 Consequently, an input assignment $a:X\rightarrow\{0,1\}$
satisfies the input and output constraints on the network if and only if it also satisfies the original
$\psi$, as needed.

The construction above is based on the assumption that we can require
that the input nodes take values in the discrete set $\{0,1\}$, which
does not fit our assumption that $\varphi_1(\vec{x})$ is a
conjunction of linear constraints. We show now how this requirement
can be relaxed.

Let $\epsilon>0$ be a very small number.
We set the
input range for each variable $x_i$ to be $[0,1]$, but we will ensure that
any feasible solution has $x_i\in[0,\epsilon]$ or $x_i\in[1-\epsilon,
1]$. We do this by adding to the network for each $x_i$ an auxiliary
gadget that uses ReLU nodes to compute the expression
\[
\max{}(0, \epsilon - x) + \max{}(0, x - 1 + \epsilon),
\]
and requiring that the output node of this gadget be in the range $[0, \epsilon]$. 
It is straightforward to show that this holds for $x\in [0,1]$ if and
only if $x\in[0,\epsilon]$ or $x\in[1-\epsilon,1]$.

The disjunction gadgets in our construction then change accordingly.
The $y_i$ nodes at the end of each
gadget will no longer take just the discrete values
$\{0,1\}$, but instead be in the range $[0,3\cdot \epsilon]$ if all
inputs were in the range $[0,\epsilon]$, or in the range
$[1-\epsilon,1]$ if at least one input was in the range
$[1-\epsilon,\epsilon]$.

If every input clause has at least one node in the range
$[1-\epsilon,1]$ then all $y_i$ nodes will be in the range
$[1-\epsilon,1]$, and consequently $y$ will be in the range
$[n(1-\epsilon), n]$. However, if at least one clause does not have a
node in the range $[1-\epsilon,1]$ then $y$ will be smaller than
$n(1-\epsilon)$ (for $\epsilon < \frac{1}{n+3}$).
Thus, by requiring that $y\in [n(1-\epsilon), n]$, the input and output
constraints will be satisfiable on the network if and only if $\psi$ is satisfiable; and the
satisfying assignment can be constructed by treating every
$x_i\in[0,\epsilon]$ as $0$ and every $x_i\in[1-\epsilon, 1]$ as $1$.
\qed
\end{proof}

\section{The Reluplex Calculus is Sound and Complete}
\label{appendix:soundness}

\newcommand{\derivation}{\mathcal{D}}
\newcommand{\derivationtree}{D}

We define a \emph{derivation tree} as a tree where
each node is a configuration whose children (if any) are obtained by
applying to it one of the derivation rules.  A derivation tree $\derivationtree$ \emph{derives} a
derivation tree $\derivationtree'$ if $\derivationtree'$ is obtained from $\derivationtree$ by applying exactly one
derivation rule to one of $\derivationtree$'s leaves.  A \emph{derivation} is a sequence
$\derivationtree_i$ of derivation trees such that $\derivationtree_0$
has only a single node and each $\derivationtree_i$ derives
$\derivationtree_{i+1}$. 
A \emph{refutation} is a derivation ending in a tree, all of whose leaves are
$\unsat{}$.  A \emph{witness} is a derivation ending in a tree, at least one of
whose leaves is $\sat{}$.
If $\phi$ is a conjunction of atoms, we say that $\derivation$ is a derivation
from $\phi$ if the initial tree in $\derivation$ contains the configuration
initialized from $\phi$.
A calculus is sound if, whenever a derivation $\derivation$ from $\phi$ is
either a refutation or a witness, $\phi$ is correspondingly unsatisfiable or satisfiable,
respectively.  A calculus is \emph{complete} if there always exists either a
refutation or a witness starting from any $\phi$.  

In order to prove that the Reluplex calculus is sound, we first prove the
following lemmas:

\begin{lemma}
\label{lemma:sameTableau}
\sloppy
Let $\derivation$ denote a derivation starting from a derivation tree $\derivationtree_0$ with a single node 
$s_0 = \langle \basic_0, T_0, \lb_0, \ub_0, \assignment_0, \reluSet_0 \rangle$.
Then, for every derivation tree $\derivationtree_i$ appearing in $\derivation$,
and for each node $s = \langle \basic, T, \lb, \ub, \assignment, \reluSet\rangle$
appearing in $\derivationtree_i$ (except for the distinguished nodes \sat{} and
\unsat{}), the following properties hold:
\begin{enumerate}[(i)]
\item an assignment satisfies $T_0$ if and only if it satisfies $T$; and
\item the assignment $\assignment$ satisfies $T$ (i.e., $\assignment$
  satisfies all equations in $T$).
\end{enumerate}
\end{lemma}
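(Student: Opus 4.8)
The plan is to prove both properties simultaneously by induction on the length of the derivation $\derivation$, i.e. on the index $i$ of the derivation tree $\derivationtree_i$. Since passing from $\derivationtree_i$ to $\derivationtree_{i+1}$ applies a single rule to a single leaf, every node of $\derivationtree_{i+1}$ other than the newly created children is already a node of $\derivationtree_i$ and inherits the two properties from the induction hypothesis. Hence it suffices to verify, for each derivation rule, that whenever the parent configuration $\langle \basic, T, \lb, \ub, \assignment, \reluSet\rangle$ satisfies (i) and (ii), so does each child configuration it produces. The distinguished children \sat{} and \unsat{} are explicitly excluded from the statement, so the \failure{} and \reluSuccess{} rules need no treatment.

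For the base case, $\derivationtree_0$ has the single node $s_0$ with $T = T_0$, so (i) is the tautology that an assignment satisfies $T_0$ iff it satisfies $T_0$, and (ii) holds because the initial assignment sets every variable to $0$, which satisfies every tableau equation $x_i = \sum_{x_j\notin\basic} c_j x_j$.

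For the inductive step I would split the rules into two groups according to which component they modify. The rules \update{}, \updateb{}, \updatef{} and \reluSplit{} leave $T$ unchanged, so (i) is immediate from the induction hypothesis; \reluSplit{} also leaves $\assignment$ unchanged, giving (ii) for free. For the three update rules, (ii) follows from the defining property of \updateOperation{}: each of these rules updates a \emph{non-basic} variable (guaranteed by the premises $x_j\notin\basic$ or $x_i\notin\basic$), and \updateOperation{} adjusts every basic $x_i$ by exactly $\delta\cdot T_{i,j}$, which is precisely the change induced in the right-hand side of its tableau equation, so all equations remain satisfied. The remaining rules \pivot{1}, \pivot{2} and \pivotForRelu{} change $T$ to $T' = \pivotOperation{}(T,i,j)$ but leave $\assignment$ unchanged.

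The crux of the argument—the step I expect to be the main obstacle—is therefore showing that \pivotOperation{} preserves the solution set of the tableau: an assignment satisfies $T$ if and only if it satisfies $T'$. This is pure linear algebra: since $T_{i,j}\neq 0$, the equation $x_i = \sum_k c_k x_k$ is equivalent to $x_j = \frac{x_i}{c_j} - \sum_{k\neq j}\frac{c_k}{c_j}x_k$, and substituting this expression for $x_j$ into the other equations is an equivalence-preserving transformation. Once this equivalence is established, property (i) for $T'$ follows by composing it with the induction hypothesis (an assignment satisfies $T_0$ iff it satisfies $T$ iff it satisfies $T'$), and property (ii) follows because $\assignment$ was unchanged and, by the induction hypothesis, already satisfied $T$, hence also satisfies the equivalent $T'$. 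This completes the induction.
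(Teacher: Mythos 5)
Your proof takes essentially the same route as the paper's: induction on $i$, observing that only the newly created children need checking, with the pivot rules handled by the solution-set-preservation of \pivotOperation{} and the update rules handled by the defining property of \updateOperation{}. The one omission is that your case analysis skips the bound-tightening rules \learnLB{} and \learnUB{}, which the paper's proof also covers; the fix is immediate, since these rules modify only $\lb$ and $\ub$ and leave both $T$ and $\assignment$ untouched, so both properties are trivially preserved.
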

\begin{proof}
The proof is by induction on $i$.  For $i = 0$, the claim holds trivially
(recall that $\assignment_0$ assigns every variable to 0).
Now, suppose the claim holds for some $i$ and consider
$\derivationtree_{i+1}$.  $\derivationtree_{i+1}$ is equivalent to
$\derivationtree_i$ except for the addition of one or more nodes added by the
application of a single derivation rule $d$ to some node $s$ with tableau $T$.
Because $s$ appears in $\derivationtree_i$,
we know by the induction hypothesis that
an assignment satisfies $T_0$ iff it satisfies $T$ and that
$\assignment$ satisfies $T$.
Let $s'$ be a new node (not a distinguished node \sat{} or
\unsat{}) with tableau $T'$ and assignment $\assignment'$, introduced by the rule $d$.
Note that $d$ cannot be \reluSuccess{} or \failure{} as these introduce only
distinguished nodes, and that if $d$ is \learnLB{}, \learnUB{}, or \reluSplit{},
then both the tableau and the assigment are unchanged, so
both properties are trivially preserved.

Suppose $d$ is \pivot{1}, \pivot{2} or
\pivotForRelu{}. For any of these rules, $\assignment' = \assignment$ and
$T'=\pivotOperation{}(T,i,j)$ for some $i$ and $j$. Observe that by definition of
the \pivotOperation{} operation, the equations of $T$ logically entail those of
$T'$ and vice versa, and so they are satisfied by exactly the same
assignments.  From this observation, both properties follow easily.

The remaining cases are when $d$ is \update{}, \updateb{} or
\updatef{}. For these rules, $T' = T$, from which property (i) follows
trivially.  For property (ii), we first note that
$\assignment'=\updateOperation{}(\assignment,x_i,\delta)$ for some $i$ and
$\delta$. By definition of
the \updateOperation{} operation, because $\assignment$ satisfied the
equations of $T$, $\assignment'$ continues to satisfy these equations and so
(because $T' = T$) $\assignment'$ also satisfies $T'$.
\qed
\end{proof}

\begin{lemma}
\label{lemma:narrowingBounds}
\sloppy
Let $\derivation$ denote a derivation starting from a derivation tree
$\derivationtree_0$ with a single node  
$s_0 = \langle \basic_0, T_0, \lb_0, \ub_0, \assignment_0, \reluSet_0
\rangle$.
If there  exists an assignment $\assignment^*$ (not
necessarily $\assignment_0$) such that $\assignment^*$  satisfies $T_0$ and
$\lb_0(x_i)\leq \assignment^*(x_i)\leq \ub_0(x_i)$ for all $i$, then
for each derivation tree $\derivationtree_i$ appearing in $\derivation$
at least one of these two properties holds:
\begin{enumerate}[(i)]
\item $\derivationtree_i$ has  a \sat{} leaf.
\item $\derivationtree_i$ has  a  leaf
$s = \langle \basic, T, \lb, \ub, \assignment, \reluSet \rangle$ 
(that is not a distinguished node \sat{} or \unsat{})
such that $\lb(x_i)\leq \assignment^*(x_i)\leq \ub(x_i)$ for all $i$.
\end{enumerate}
\end{lemma}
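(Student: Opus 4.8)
The plan is to prove the lemma by induction on $i$, the index of the derivation tree $\derivationtree_i$ in $\derivation$. The base case $i=0$ is immediate: the single node $s_0$ is a leaf that is neither \sat{} nor \unsat{}, and by hypothesis $\lb_0(x_i)\leq\assignment^*(x_i)\leq\ub_0(x_i)$ for all $i$, so property (ii) holds. For the inductive step I assume the claim for $\derivationtree_i$ and examine $\derivationtree_{i+1}$, which is obtained by applying a single rule $d$ to one leaf of $\derivationtree_i$. If $\derivationtree_i$ already has a \sat{} leaf, then so does $\derivationtree_{i+1}$, since rules are never applied to distinguished nodes; I may therefore assume that $\derivationtree_i$ has instead a ``good'' leaf $s=\langle\basic,T,\lb,\ub,\assignment,\reluSet\rangle$ satisfying the bound condition of (ii). If $d$ is applied to a leaf other than $s$, then $s$ survives untouched in $\derivationtree_{i+1}$ and (ii) still holds, so the only real work is the case where $d$ is applied to $s$ itself. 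Throughout, I will use Lemma~\ref{lemma:sameTableau}(i) to conclude that $\assignment^*$, which satisfies $T_0$ by hypothesis, also satisfies the tableau $T$ of every node; this fact is needed repeatedly below and comes for free rather than having to be carried in the invariant.

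The rule-by-rule analysis then proceeds as follows. For \update{}, \updateb{}, \updatef{}, \pivot{1}, \pivot{2}, and \pivotForRelu{}, the bounds $\lb,\ub$ are left unchanged by $d$, so the single child inherits the bound condition immediately. For the bound-tightening rules \learnLB{} and \learnUB{} the bounds do change, and here I would use the tableau equation $\assignment^*(x_i)=\sum_{x_j\notin\basic}T_{i,j}\,\assignment^*(x_j)$ together with the goodness of $s$: for \learnLB{}, splitting the sum over $\posWeights(x_i)$ and $\negWeights(x_i)$ and using $\assignment^*(x_j)\geq\lb(x_j)$ where $T_{i,j}>0$ and $\assignment^*(x_j)\leq\ub(x_j)$ where $T_{i,j}<0$ yields $\assignment^*(x_i)\geq\sum_{x_j\in\posWeights(x_i)}T_{i,j}\,\lb(x_j)+\sum_{x_j\in\negWeights(x_i)}T_{i,j}\,\ub(x_j)$, which is exactly the newly derived lower bound; the tightened bound therefore still respects $\assignment^*(x_i)$, and \learnUB{} is symmetric. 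For \reluSplit{} on a pair $\langle x_i,x_j\rangle$, the two children differ from $s$ only in the bound of $x_i$ (one sets $\ub(x_i):=0$, the other $\lb(x_i):=0$); since $\assignment^*(x_i)$ is either $\leq 0$ or $\geq 0$, the corresponding child preserves the bound condition, so a good child always exists. Finally, if $d$ is \reluSuccess{}, its child is a \sat{} node and property (i) holds for $\derivationtree_{i+1}$.

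The crux, and the step I expect to be the main obstacle, is showing that \failure{} can never be the rule applied to a good leaf $s$, for otherwise $s$ would be replaced by an \unsat{} node and the good leaf lost with no replacement. This is the classical soundness argument for simplex, transplanted to the Reluplex setting. Suppose \failure{} fired on $s$ via a basic $x_i$ with, say, $\assignment(x_i)<\lb(x_i)$ and $\slackPlus(x_i)=\emptyset$. Emptiness of $\slackPlus(x_i)$ means every non-basic $x_j$ with $T_{i,j}>0$ satisfies $\assignment(x_j)\geq\ub(x_j)$ and every non-basic $x_j$ with $T_{i,j}<0$ satisfies $\assignment(x_j)\leq\lb(x_j)$. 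Using the tableau equation, valid for both $\assignment$ and $\assignment^*$ by Lemma~\ref{lemma:sameTableau}, and comparing term by term (for $T_{i,j}>0$, $\assignment^*(x_j)\leq\ub(x_j)\leq\assignment(x_j)$; for $T_{i,j}<0$, $\assignment^*(x_j)\geq\lb(x_j)\geq\assignment(x_j)$, so in each case $T_{i,j}\,\assignment^*(x_j)\leq T_{i,j}\,\assignment(x_j)$), one obtains $\assignment^*(x_i)\leq\assignment(x_i)<\lb(x_i)$, contradicting $\assignment^*(x_i)\geq\lb(x_i)$, which holds because $s$ is good. The symmetric case $\assignment(x_i)>\ub(x_i)$ with $\slackMinus(x_i)=\emptyset$ is handled the same way. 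Hence \failure{} is inapplicable to $s$, this case does not arise, and the induction goes through.
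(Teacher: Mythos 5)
Your proof is correct and follows essentially the same route as the paper's: induction on the tree index, reduction to the case where the rule is applied to the good leaf, trivial handling of the bound-preserving rules, a case split on the sign of $\assignment^*(x_i)$ for \reluSplit{}, the tableau-equation estimate for \learnLB{}/\learnUB{}, and a contradiction showing \failure{} cannot fire on a good leaf. The only cosmetic difference is in the \failure{} case, where the paper exhibits a variable that must lie in $\slackPlus(x_i)$ while you argue the contrapositive (emptiness of $\slackPlus(x_i)$ forces $\assignment^*(x_i)\leq\assignment(x_i)<\lb(x_i)$); these are the same inequality computation.
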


\begin{proof}
The proof is again by induction on $i$. For $i=0$, property (ii)
holds trivially. 
Now, suppose the claim holds for some $i$ and consider
$\derivationtree_{i+1}$.  $\derivationtree_{i+1}$ is equivalent to
$\derivationtree_i$ except for the addition of one or more nodes added by the
application of a single derivation rule $d$ to a leaf $s$ of $\derivationtree_i$.

Due to the induction hypothesis, we know that $\derivationtree_i$ has
a leaf $\bar{s}$ that is either a \sat{} leaf or that satisfies property
(ii). If $\bar{s}\neq s$, then $\bar{s}$ also appears in
$\derivationtree_{i+1}$, and the claim holds. We will show that
the claim also holds when $\bar{s}=s$. 
Because none of the derivation rules can be applied to a \sat{} or
\unsat{} node, we know that node $s$ is not a distinguished \sat{} or
\unsat{} node, and we denote
$s = \langle \basic, T, \lb, \ub, \assignment, \reluSet \rangle$. 

If $d$ is \reluSuccess{}, $\derivationtree_{i+1}$ has a \sat{} leaf
and property (i) holds.
Suppose $d$ is \pivot{1}, \pivot{2}, \pivotForRelu{}, \update{},
\updateb{} or \updatef{}. In any of these cases, node $s$ has a single
child in $\derivationtree_{i+1}$, which we denote
$s' = \langle \basic', T', \lb', \ub', \assignment', \reluSet'
\rangle$. By definition of these derivation rules, $\lb'(x_j)=\lb(x_j)$ and
$\ub'(x_j)=\ub(x_j)$ for all $j$. Because node $s$ satisfies property
(ii), we get that 
$s'$ is a leaf that satisfies property (ii), as needed.

Suppose that $d$ is \reluSplit{}, applied to a pair $\langle
x_i,x_j\rangle\in R$. Node $s$ has two children in
$\derivationtree_{i+1}$: a state $s^+$ in which the lower bound for
$x_i$ is $0$, and a
state $s^-$ in which the upper bound for $x_i$ is $0$. All other lower and upper bounds in
$s^+$ and $s^-$ are identical to those of $s$. It is straightforward
to see that if $\assignment^*(x_i)\geq 0$ then property (ii) holds
for $s^+$, and if $\assignment^*(x_i)\leq 0$ then property (ii)
holds for $s^-$. Either way, $\derivationtree_{i+1}$ has a leaf for
which property (ii) holds, as needed.

\sloppy
Next, consider the case where  $d$ is \learnLB{}
 (the \learnUB{} case is symmetrical and is omitted).
Node $s$ has a single
child in $\derivationtree_{i+1}$, which we denote
$s' = \langle \basic', T', \lb', \ub', \assignment', \reluSet'
\rangle$. Let $x_i$ denote the variable to which $\learnLB{}$ was applied.
By definition, $\lb'(x_i)\geq \lb(x_i)$, and all other variable bounds
are unchanged between $s$ and $s'$.
Thus, it suffices to show that $\assignment^*(x_i)\geq
\lb'(x_i)$. Because $\assignment^*$ satisfies $T_0$, it follows from 
Lemma~\ref{lemma:sameTableau} that it satisfies $T$. By the induction
hypothesis, $\lb(x_j)\leq\assignment^*(x_j)\leq\ub(x_j)$ for all
$j$. The fact that $\assignment^*(x_i)\geq\lb'(x_i)$ then follows
directly from the guard condition of \learnLB{}.

The only remaining case is when $d$ is the \failure{} rule. We explain why
this case is impossible. Suppose towards contradiction that in node
$s$ the \failure{} rule is applicable to variable $x_i$, and suppose
(without loss of generality) that $\assignment(x_i) < \lb(x_i)$.
By the inductive hypothesis, we know that 
$\lb(x_j)\leq\assignment^*(x_j)\leq\ub(x_j)$ for all $j$, and by
Lemma~\ref{lemma:sameTableau} we know that $\assignment^*$ satisfies $T$.
Consequently, there must be a variable $x_k$ such that 
$(T_{i,k}>0\ \wedge\ \assignment(x_k)<\assignment^*(x_k))$, 
or
$(T_{i,k}<0\ \wedge\ \assignment(x_k)>\assignment^*(x_k))$. But because all
variables under $\assignment^*$ are within their bounds, this means that
$slack^+(x_i)\neq\emptyset$, which is contradictory to the fact that
the \failure{} rule was applicable in $s$.
\qed
\end{proof}

\begin{lemma}
\label{lemma:boundsAndRelus}
\sloppy
Let $\derivation$ denote a derivation starting from a derivation tree $\derivationtree_0$ with a single node 
$s_0 = \langle \basic_0, T_0, \lb_0, \ub_0, \assignment_0, \reluSet_0 \rangle$.
Then, for every derivation tree $\derivationtree_i$ appearing in $\derivation$,
and for each node $s = \langle \basic, T, \lb, \ub, \assignment, \reluSet\rangle$
appearing in $\derivationtree_i$ (except for the distinguished nodes \sat{} and
\unsat{}), the following properties hold:
\begin{enumerate}[(i)]
\item $\reluSet = \reluSet_0$; and
\item $\lb(x_i)\geq \lb_0(x_i)$ and 
$\ub(x_i)\leq \ub_0(x_i)$ for all $i$.
\end{enumerate}
\end{lemma}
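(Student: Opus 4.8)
The plan is to prove both properties simultaneously by induction on $i$, the index of the derivation tree, mirroring the structure already used in Lemmas~\ref{lemma:sameTableau} and~\ref{lemma:narrowingBounds}. For the base case $i=0$, the tree $\derivationtree_0$ consists of the single node $s_0$, for which $\reluSet = \reluSet_0$, $\lb = \lb_0$, and $\ub = \ub_0$ by definition, so both properties hold trivially (property (ii) with equality). For the inductive step, I would assume both properties hold for every node in $\derivationtree_i$ and consider $\derivationtree_{i+1}$, which differs from $\derivationtree_i$ only by the nodes introduced when a single derivation rule $d$ is applied to some leaf $s$. Every node carried over unchanged from $\derivationtree_i$ still satisfies the claim by the induction hypothesis, so it suffices to check the newly introduced (non-distinguished) children $s'$.

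The argument then proceeds by a case analysis on which rule $d$ is, exactly as in the earlier lemmas. For property (i), the key observation is that \emph{no} derivation rule ever modifies the $\reluSet$ component: the rules \pivot{1}, \pivot{2}, \pivotForRelu{}, \update{}, \updateb{}, \updatef{}, \learnLB{}, \learnUB{}, and \reluSplit{} all leave $\reluSet$ untouched in their conclusions, and \reluSuccess{} and \failure{} produce only the distinguished nodes \sat{} and \unsat{}, which are excluded from the statement. Hence $\reluSet' = \reluSet = \reluSet_0$ for any new node $s'$, using the induction hypothesis for the middle equality. For property (ii), I would likewise group the rules by their effect on bounds: the rules \pivot{1}, \pivot{2}, \pivotForRelu{}, \update{}, \updateb{}, and \updatef{} leave both $\lb$ and $\ub$ unchanged, so $\lb'(x_i)=\lb(x_i)\geq\lb_0(x_i)$ and $\ub'(x_i)=\ub(x_i)\leq\ub_0(x_i)$ follow immediately from the induction hypothesis.

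The only rules that actually alter bounds are \learnLB{}, \learnUB{}, and \reluSplit{}, and these are the ones to treat with care. For \learnLB{} (the \learnUB{} case being symmetric), the rule only \emph{raises} a lower bound: its conclusion sets $\lb'(x_i)$ to a value that, by the rule's guard, strictly exceeds the previous $\lb(x_i)$, while leaving all upper bounds and all other lower bounds fixed. Thus $\lb'(x_i)>\lb(x_i)\geq\lb_0(x_i)$, preserving property (ii). For \reluSplit{} applied to a pair $\langle x_i,x_j\rangle$, the two children $s^+$ and $s^-$ respectively set $\lb(x_i):=0$ and $\ub(x_i):=0$; here I must invoke the guard of \reluSplit{}, which requires $\lb(x_i)<0$ and $\ub(x_i)>0$ before the split. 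In the child $s^+$ we have $\lb'(x_i)=0>\lb(x_i)\geq\lb_0(x_i)$ (using the guard $\lb(x_i)<0$ together with the induction hypothesis), and symmetrically $\ub'(x_i)=0<\ub(x_i)\leq\ub_0(x_i)$ in $s^-$; all other bounds are inherited from $s$. I expect this \reluSplit{} case to be the main subtlety, since it is the only rule that can lower an upper bound or raise a lower bound \emph{toward} $0$ rather than simply tightening monotonically by derivation, and one must check that the sign conditions in the rule's guard are exactly what guarantees the new bound still lies inside the interval $[\lb_0(x_i),\ub_0(x_i)]$. Once all cases are dispatched, both properties hold for $\derivationtree_{i+1}$, completing the induction. \qed
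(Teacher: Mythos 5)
Your proof is correct and follows essentially the same approach as the paper's: observe that no rule ever modifies $\reluSet$, and that the only rules touching bounds can only raise lower bounds or lower upper bounds, so both properties are preserved inductively. In fact you are slightly more careful than the paper's own (two-sentence) proof, which lists only \learnLB{} and \learnUB{} as bound-modifying rules and omits \reluSplit{}; your explicit use of the guard $\lb(x_i)<0$, $\ub(x_i)>0$ to show that setting a bound to $0$ is still a tightening fills that small gap.
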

\begin{proof}
Property (i) follows from the fact that none of the derivation rules
(except for \reluSuccess{} and \failure{}) changes the set
$R$. Property (ii) follows from the fact that the only rules 
(except for \reluSuccess{} and \failure{})
that update lower and upper variable bounds are \learnLB{} and
\learnUB{}, respectively, and that these rules can only increase lower
bounds or decrease upper bounds.
\end{proof}

\noindent
We are now ready to prove that the Reluplex calculus is sound and complete.
\begin{claim}
The Reluplex calculus is sound.
\end{claim}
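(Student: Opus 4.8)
The plan is to prove soundness by treating the two cases from the definition separately: that a \emph{witness} implies satisfiability, and that a \emph{refutation} implies unsatisfiability. Both directions will be reduced to the three lemmas just established, which together guarantee that the tableau equations, the variable bounds, and the ReLU set seen along any derivation faithfully track the original problem $\phi$.

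First I would handle the witness case. Suppose the derivation $\derivation$ ends in a tree with a \sat{} leaf. Such a leaf can only have been introduced by the \reluSuccess{} rule, applied to some non-distinguished node $s = \langle \basic, T, \lb, \ub, \assignment, \reluSet\rangle$. The premises of \reluSuccess{} give immediately that $\assignment$ respects every bound, i.e.\ $\lb(x)\leq\assignment(x)\leq\ub(x)$ for all $x$, and that $\assignment(x^f)=\max(0,\assignment(x^b))$ for every $\langle x^b,x^f\rangle\in\reluSet$. It remains to lift these facts back to the \emph{initial} configuration. By Lemma~\ref{lemma:sameTableau}, $\assignment$ satisfies $T$, and since an assignment satisfies $T_0$ iff it satisfies $T$, the assignment $\assignment$ also satisfies $T_0$. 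By Lemma~\ref{lemma:boundsAndRelus}, bounds can only be tightened along a derivation, so $\lb_0(x)\leq\lb(x)\leq\assignment(x)\leq\ub(x)\leq\ub_0(x)$, placing $\assignment$ within the original bounds; and $\reluSet=\reluSet_0$, so the ReLU semantics hold for exactly the pairs present in the initial problem. Hence $\assignment$ satisfies $T_0$, the original bounds, and all original ReLU atoms, which is precisely a satisfying assignment for $\phi$.

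Next I would handle the refutation case by contraposition, showing that if $\phi$ is satisfiable then $\derivation$ cannot be a refutation. If $\phi$ is satisfiable there is an assignment $\assignment^*$ satisfying all atoms of $\phi$; in particular $\assignment^*$ satisfies $T_0$ and lies within $\lb_0,\ub_0$, which is exactly the hypothesis of Lemma~\ref{lemma:narrowingBounds} (note only the linear part of $\phi$ is needed here, not the ReLU atoms). That lemma then guarantees that the final derivation tree has either a \sat{} leaf or a non-distinguished leaf whose bounds still contain $\assignment^*$. In either case this leaf is not \unsat{}, so not all leaves are \unsat{}, and $\derivation$ is not a refutation. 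The contrapositive yields that every refutation witnesses unsatisfiability.

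The conceptual work is almost entirely front-loaded into the three lemmas, so I expect the main obstacle to be the bookkeeping of relating a \emph{late} configuration back to the \emph{initial} one, rather than any single hard argument. Concretely, the delicate point is ensuring that membership within the tightened bounds of a late node actually implies membership within the original bounds --- which is where Lemma~\ref{lemma:boundsAndRelus}(ii) is essential --- and that the ReLU constraints checked by \reluSuccess{} are over the same set $\reluSet_0$ that encodes $\phi$, which is Lemma~\ref{lemma:boundsAndRelus}(i). Once these invariants are in hand, both directions follow with no further calculation.
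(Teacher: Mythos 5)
Your proposal is correct and follows essentially the same route as the paper's proof: the witness case combines the \reluSuccess{} guard with Lemma~\ref{lemma:sameTableau} and Lemma~\ref{lemma:boundsAndRelus} to pull the satisfying assignment back to the initial configuration, and the refutation case is handled by contraposition via Lemma~\ref{lemma:narrowingBounds}, exactly as in the paper. No gaps.
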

\begin{proof}
  We begin with the satisfiable case. Let
  $\derivation{}$ denote a witness for
  $\phi$. By definition, the final tree $\derivationtree{}$ in
  $\derivation{}$ has a $\sat{}$
  leaf. Let 
$s_0 = \langle \basic_0, T_0, \lb_0, \ub_0, \assignment_0, \reluSet_0 \rangle$ 
 denote the initial state of $\derivationtree_0$ and let
  $s = \langle \basic, T, \lb, \ub, \assignment, \reluSet \rangle$ 
  denote a state in $\derivationtree$ in which the \reluSuccess{}
  rule was applied (i.e., a predecessor of a \sat{} leaf). 

  By Lemma~\ref{lemma:sameTableau}, $\assignment$ satisfies
  $T_0$. Also,
  by the guard conditions of the $\reluSuccess{}$ rule, 
  $\lb(x_i)\leq \assignment(x_i)\leq \ub(x_i)$ for all $i$. 
  By property (ii) of Lemma~\ref{lemma:boundsAndRelus}, this implies that
  $\lb_0(x_i)\leq \assignment(x_i)\leq \ub_0(x_i)$ for all $i$.
  Consequently, $\assignment$ satisfies every linear inequality in $\phi$.

  Finally, we observe that by the conditions of
  the \reluSuccess{} rule, $\assignment$ satisfies all ReLU
  constraints of $s$. From property (i) of
  Lemma~\ref{lemma:boundsAndRelus}, it follows that $\assignment$ also
  satisfies the ReLU constraints of $s_0$, which are precisely the ReLU
  constraints in $\phi$.
  We conclude that $\assignment$ satisfies every constraint in $\phi$,
  and hence $\phi$ is satisfiable, as needed.

  For the unsatisfiable case, it suffices to show that if $\phi$ is
  satisfiable then there cannot exist a refutation for it.
  This is a direct result of
  Lemma~\ref{lemma:narrowingBounds}: if $\phi$ is satisfiable, then
  there exists an assignment $\assignment^*$ that satisfies the
  initial tableau $T_0$, and for which all variables are within
  bounds. Hence, Lemma~\ref{lemma:narrowingBounds} implies that
  any derivation tree in any derivation $\derivation$ from $\phi$ must have a leaf that is not the
  distinguished \unsat{} leaf. It follows that there cannot exist a
  refutation for $\phi$.
  \qed
\end{proof}

\begin{claim}
The Reluplex calculus is complete.
\end{claim}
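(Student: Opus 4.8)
The plan is to exhibit, for an arbitrary conjunction of atoms $\phi$, a concrete strategy whose derivation tree is finite and all of whose leaves are distinguished nodes; such a tree is necessarily either a refutation or a witness, which is exactly what completeness demands. I would use the eager-splitting strategy sketched in Section~\ref{sec:reluplex}: apply \reluSplit{} repeatedly, to every ReLU pair whose backward variable currently straddles zero, until the rule no longer applies, and only then attempt to close each resulting leaf.

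The first step is to show that the splitting phase has bounded depth. Each application of \reluSplit{} to $\langle x_i,x_j\rangle$ sets either $\lb(x_i):=0$ or $\ub(x_i):=0$, after which the guard $\lb(x_i)<0 \wedge \ub(x_i)>0$ can never hold again for that pair: by Lemma~\ref{lemma:boundsAndRelus} bounds are only ever tightened, so $\lb(x_i)\geq 0$ (resp. $\ub(x_i)\leq 0$) persists, and property (i) of the same lemma guarantees $\reluSet=\reluSet_0$ throughout. Hence along any branch at most $|\reluSet_0|$ splits occur, and the splitting phase produces a finite tree whose leaves are \emph{fully split}: every ReLU pair is forced active ($\lb(x_i)\geq 0$) or inactive ($\ub(x_i)\leq 0$). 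At such a leaf every ReLU constraint has collapsed to a linear one ($x_j=x_i$ when $x_i$ is forced nonnegative, $x_j=0$ when $x_i$ is forced nonpositive), so I would reduce the leaf to the pure-$\tr{}$ feasibility problem $\phi_\ell$ obtained by replacing each fixed ReLU by its linear equation, and then split on whether the original $\phi$ is satisfiable.

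In the satisfiable case, a satisfying assignment $\assignment^*$ of $\phi$ determines, for each ReLU, whether it is active or inactive, and I would follow the unique split branch matching this pattern. On that branch $\assignment^*$ satisfies $T_0$ and lies within all initial bounds, so Lemma~\ref{lemma:narrowingBounds} guarantees that no node on the branch is ever forced to \unsat{} (the \failure{} rule's premises are refuted exactly as in the lemma's final case). Invoking completeness of the ordinary simplex calculus on $\phi_\ell$ — whose ReLU-induced equations are enforced in Reluplex by \updateb{}, \updatef{} and \pivotForRelu{} — I would drive this leaf to a configuration where all bounds hold and all ReLUs are satisfied, at which point \reluSuccess{} fires and the whole derivation is a witness. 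In the unsatisfiable case, every leaf's $\phi_\ell$ is unsatisfiable (a model of any $\phi_\ell$ would be a model of $\phi$), so by simplex completeness each leaf admits a derivation ending in \unsat{} via \failure{}, and performing these on all leaves yields a refutation.

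The main obstacle is the bridging step in the second phase. The Reluplex calculus never literally inserts the equations $x_j=x_i$ or $x_j=0$ into the tableau; it enforces a fixed ReLU only through the auxiliary \updateb{}/\updatef{}/\pivotForRelu{} rules, whose updates may break bounds that \pivot{1}, \pivot{2} and \update{} must then re-fix, and vice versa. Converting the statement ``simplex on $\phi_\ell$ is complete'' into an actual terminating Reluplex derivation on the leaf therefore requires showing that ReLU-enforcement and bound-enforcement can be interleaved without cycling — for instance via a Bland-style or lexicographic selection discipline, together with the observation that, once a ReLU is fixed, enforcing it coincides with enforcing its linear equation. This is the only place where real work beyond bookkeeping is needed; everything else follows from finiteness of the splitting phase and the three soundness lemmas.
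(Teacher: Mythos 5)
Your strategy is the same two\nobreakdash-phase strategy the paper itself uses: eagerly apply \reluSplit{} until it no longer applies, then finish each leaf with simplex. Your phase\nobreakdash-one analysis (at most $|\reluSet_0|$ splits per branch, by the monotonicity of bounds from Lemma~\ref{lemma:boundsAndRelus}) is a more explicit version of what the paper states in one sentence, and your use of Lemma~\ref{lemma:narrowingBounds} to steer the satisfiable case toward a branch that can never reach \unsat{} matches the structure of the paper's argument. The divergence is entirely in how phase two is discharged. You treat each fully split leaf as the LRA problem $\phi_\ell$ obtained by adjoining the linearized ReLU equations, correctly observe that these equations are never actually placed in the tableau, and conclude that one must keep \updateb{}, \updatef{} and \pivotForRelu{} live in phase two and prove that interleaving ReLU\nobreakdash-repair with bound\nobreakdash-repair does not cycle --- which you leave open. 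The paper sidesteps this entirely with a single assertion: once \reluSplit{} has been applied to completion, \emph{any} assignment satisfying the variable bounds at a leaf is claimed to also satisfy the ReLU constraints there. Under that assertion phase two is literally the simplex calculus with \reluSuccess{} substituted for \success{}, Bland's rule gives termination, and no interleaving ever occurs. So the obstacle you flag is precisely the step the paper compresses into that one claim, rather than something the paper resolves by a Bland\nobreakdash-style argument for the combined rule set. If you want to close your proof the paper's way, the remaining work is to justify that claim --- and note that it deserves care, since \reluSplit{} constrains only the backward variable $x^b$, so one must explain why the forward variable $x^f$ ends up pinned to $\max(0,x^b)$ by bounds alone; otherwise you must supply the cycling\nobreakdash-freedom argument you sketch, which the paper does not provide.
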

\begin{proof}
Having shown that the Reluplex calculus is sound, it suffices to show
a strategy for deriving a witness or a refutation 
for every 
 $\phi$ within a finite number of steps.
As mentioned in Section~\ref{sec:reluplex}, one such strategy involves two steps:
\begin{inparaenum}[(i)]
\item 
Eagerly apply the \reluSplit{} rule until it no longer applies; 
and
\item
 For every leaf of the resulting derivation tree, apply the simplex rules 
\pivot{1}, \pivot{2},
\update{}, and \failure{}, and the Reluplex rule \reluSuccess{}, in a way that guarantees a \sat{}
or an \unsat{} configuration is reached within a finite number of
steps.
\end{inparaenum}

Let $\derivationtree$  denote the derivation tree obtained after
step (i). In every leaf $s$ of 
$\derivationtree$,
all ReLU connections have been eliminated, meaning that the variable
bounds force each ReLU connection to be either active or inactive. 
 This means that
every such $s$ can be regarded as a pure
simplex problem, and that any solution to that simplex problem is
guaranteed to satisfy also the ReLU constraints in $s$.

The existence of a terminating simplex strategy for
deciding the satisfiability of each leaf of $\derivationtree$
 follows from the completeness of the simplex calculus~\cite{Va96}. One such
widely used strategy is \emph{Bland's Rule}~\cite{Va96}. We observe
that although the simplex \success{} rule does not exist in Reluplex,
it can be directly substituted with the \reluSuccess{} rule. This is
so because, having applied the \reluSplit{} rule to completion, any assignment that satisfies
the variable bounds in $s$ also satisfies the ReLU constraints in $s$.

It follows that for every
$\phi$ we can produce a witness or a refutation, as needed.
\qed
\end{proof}

\section{A Reluplex Strategy that Guarantees Termination}
\label{appendix:termination}
As discussed in Section~\ref{sec:evaluation}, our strategy for
applying the Reluplex rules was
to repeatedly fix any out-of-bounds violations first (using the
original simplex rules),
and only afterwards to correct any violated ReLU constraints using the
\updateb{}, \updatef{} and \pivotForRelu{} rules. If correcting a
violated ReLU constraint introduced new out-of-bounds violations, these
were again fixed using the simplex rules, and so on.

As mentioned above, there exist well known strategies for applying the simplex rules in a
way that guarantees that within a finite number of steps, either all variables become assigned to
values within their bounds, or the \failure{} rule is applicable (and
is applied)~\cite{Va96}. By using such a strategy for fixing
out-of-bounds violations, and by splitting on a ReLU pair whenever the
\updateb{}, \updatef{} or \pivotForRelu{} rules are applied to it more some
fixed number of times, termination is guaranteed.


\section{Under-Approximations}
\label{appendix:approximation}
Under-approximation can be integrated into the Reluplex algorithm in a straightforward
manner.
Consider a variable $x$ with lower and upper bounds $l(x)$ and
$u(x)$, respectively. Since we are searching for feasible solutions for
which  $x\in [l(x),u(x)]$, an under-approximation 
can be obtained by restricting this range, and only considering
feasible solutions for which $x\in
[l(x)+\epsilon,u(x)-\epsilon]$ for some small $\epsilon>0$.

Applying under-approximations can be particularly useful
when it effectively eliminates a ReLU constraint (consequently reducing the
potential number of case splits needed).
 Specifically, observe a ReLU pair
$x^f=\relu(x^b)$ for which we have 
 $\lb{}(x^b)\geq -\epsilon$ for a very small positive $\epsilon$.
We can under-approximate this range and
instead set $\lb{}(x^b)= 0$; and, as previously discussed, we can then fix
the ReLU pair to the active state. Symmetrical measures
can be employed when learning a very small upper bound for $x^f$, 
 in this case leading to the ReLU pair being fixed in the inactive state.

 Any feasible
solution that is found using this kind of under-approximation will be a feasible
solution for the original problem. However, if we determine that the
under-approximated problem is infeasible, the original may yet be
feasible.

\section{Encoding ReLUs for SMT and LP Solvers}
\label{appendix:encoding}
We demonstrate the encoding of ReLU nodes that we used for the
evaluation conducted using SMT and LP solvers. Let $y=\relu{}(x)$. In
the SMTLIB format, used by all SMT solvers that we tested, ReLUs were
encoded using an if-then-else construct: 

\begin{verbatim}
(assert (= y (ite (>= x 0) x 0)))
\end{verbatim}

\newcommand{\bon}{b\(\sb{\text{on}}\)}
\newcommand{\boff}{b\(\sb{\text{off}}\)}

In LP format this was encoded using mixed integer programming. Using
Gurobi's built-in Boolean type, we defined for every ReLU connection
a pair of Boolean variables, 
\bon{} and \boff{}, and used them to encode the two possible states of the connection.
Taking $M$ to be a very large positive constant, we used the following assertions:

\begin{alltt}
\bon{} + \boff{} = 1
y >= 0
x - y - M*\boff <= 0
x - y + M*\boff >= 0
y - M*\bon <= 0
x - M*\bon <= 0
\end{alltt}

When \bon{}$=1$ and \boff{}$=0$, the ReLU connection is in the active
state; and otherwise, when \bon{}$=0$ and \boff{}$=1$, it is in the inactive state.

 In the active case, because \boff{} $= 0$ the third and fourth
equations imply that $x=y$ (observe that $y$ is always non-negative). 
$M$ is very large, and can be regarded
as $\infty$; hence, because
\bon{}$=1$,  the last two equations merely imply that $x,y\leq\infty$,
and so pose no restriction on the solution.

In the inactive case, \bon{} $=0$, and so the last two equations force
$y=0$ and $x\leq 0$. In this case \boff{}$=1$ and so the third and fourth
 equations pose no restriction on the solution.

\section{Formal Definitions for Properties $\phi_1$,\ldots,$\phi_{10}$}
\label{appendix:properties}
The units for the ACAS Xu DNNs' inputs are:
\begin{itemize}
\item $\rho$: feet.
\item $\theta,\psi$: radians. 
\item $v_\text{own}, v_\text{int}$: feet per second.
\item $\tau$: seconds.
\end{itemize}
$\theta$ and $\psi$ are measured counter clockwise, and are always in
the range $[-\pi,\pi]$.

In line with the discussion in Section~\ref{sec:acasxu}, the family of 45 ACAS Xu
DNNs are indexed according to the previous action $a_\text{prev}$ and
time until loss of vertical separation $\tau$. The possible values are
for these two indices are:

\begin{enumerate}
\item $a_\text{prev}$: $[$Clear-of-Conflict, weak left, weak right,
  strong left, strong right$]$.
\item $\tau$: $[0,1,5,10,20,40,60,80,100]$.
\end{enumerate}

We use $N_{x,y}$ to denote the network trained for the
$x$-th value of $a_\text{prev}$ and $y$-th value of $\tau$. For
example, $N_{2,3}$ is the network trained for the case where
$a_\text{prev}=$  weak left
and
 $\tau=5$.
Using this notation, we now give the formal definition of each of the
properties $\phi_1,\ldots,\phi_{10}$ that we tested.

\subsubsection{Property $\phi_1$.}
\begin{itemize}
\item Description:
  If the intruder is distant and is
  significantly slower than the ownship, the score of a COC advisory will
  always be below a certain fixed threshold.
\item Tested on: all 45 networks.
\item Input constraints:
  $\rho\geq 55947.691$,
  $v_\text{own}\geq 1145$,
  $v_\text{int}\leq 60$.
\item Desired output property: the score for COC is at most $1500$.
\end{itemize}

\subsubsection{Property $\phi_2$.}
\begin{itemize}
\item Description:
  If the intruder is distant and is
  significantly slower than the ownship, the score of a COC advisory will
  never be maximal.
\item Tested on: $N_{x,y}$ for all $x\geq 2$ and for all $y$.
\item Input constraints:
  $\rho\geq 55947.691$,
  $v_\text{own}\geq 1145$,
  $v_\text{int}\leq 60$.
\item Desired output property: the score for COC is not the maximal score.
\end{itemize}

\subsubsection{Property $\phi_3$.}
\begin{itemize}
\item Description:
  If the intruder is directly ahead and is moving towards the ownship,
  the score for COC will not be minimal.
\item Tested on: all networks except $N_{1,7}$, $N_{1,8}$, and $N_{1,9}$.
\item Input constraints:  
    $1500 \leq \rho \leq 1800$,
    $-0.06 \leq \theta \leq 0.06$,
    $\psi \geq 3.10$,
    $v_\text{own}\geq 980$,
    $v_\text{int}\geq 960$.
\item Desired output property: the score for COC is not the minimal score.
\end{itemize}

\subsubsection{Property $\phi_4$.}
\begin{itemize}
\item Description:
  If the intruder is directly ahead and is moving away from the
  ownship but at a lower speed than that of the ownship,
  the score for COC will not be minimal.
\item Tested on: all networks except $N_{1,7}$, $N_{1,8}$, and $N_{1,9}$.
\item Input constraints:  
    $1500 \leq \rho \leq 1800$,
    $-0.06 \leq \theta \leq 0.06$,
    $\psi = 0$,
    $v_\text{own}\geq 1000$,
    $700 \leq v_\text{int}\leq 800$.
\item Desired output property: the score for COC is not the minimal score.
\end{itemize}

\subsubsection{Property $\phi_5$.}
\begin{itemize}
\item Description:
  If the intruder is near and approaching from the
  left, the network advises ``strong right''.
\item Tested on: $N_{1,1}$.
\item Input constraints:
    $250 \leq \rho \leq 400$,
    $0.2 \leq \theta \leq 0.4$,
    $-3.141592 \leq \psi \leq -3.141592 + 0.005$,
    $100 \leq v_\text{own}\leq 400$,
    $0 \leq v_\text{int}\leq 400$.
\item Desired output property: the score for ``strong right'' is the minimal score.
\end{itemize}

\subsubsection{Property $\phi_6$.}
\begin{itemize}
\item Description:
  If the intruder is sufficiently far away,
  the network advises COC.
\item Tested on: $N_{1,1}$.
\item Input constraints:
    $12000 \leq \rho \leq 62000$,
    $(0.7 \leq \theta \leq 3.141592)
    \vee
    (-3.141592 \leq \theta \leq -0.7)$,
    $-3.141592 \leq \psi \leq -3.141592 + 0.005$,
    $100 \leq v_\text{own}\leq 1200$,
    $0 \leq v_\text{int}\leq 1200$.
\item Desired output property: the score for COC is the minimal score.
\end{itemize}

\subsubsection{Property $\phi_7$.}
\begin{itemize}
\item Description:
  If vertical separation is large,
  the network will never advise a strong turn.
\item Tested on: $N_{1,9}$.
\item Input constraints:
    $0 \leq \rho \leq 60760$,
    $-3.141592 \leq \theta \leq 3.141592$,
    $-3.141592 \leq \psi \leq 3.141592$,
    $100 \leq v_\text{own}\leq 1200$,
    $0 \leq v_\text{int}\leq 1200$.
\item Desired output property: the scores for ``strong right'' and
  ``strong left'' are never the minimal scores.
\end{itemize}

\subsubsection{Property $\phi_8$.}
\begin{itemize}
\item Description:
 For a large vertical separation and a previous
 ``weak left'' advisory, the network will either output COC or
 continue advising ``weak left''.
\item Tested on: $N_{2,9}$.
\item Input constraints:
    $0 \leq \rho \leq 60760$,
    $-3.141592 \leq \theta \leq -0.75\cdot 3.141592$,
    $-0.1 \leq \psi \leq 0.1$,
    $600 \leq v_\text{own}\leq 1200$,
    $600 \leq v_\text{int}\leq 1200$.
\item
Desired output property: the score for ``weak left'' is minimal or the
score for COC is minimal.
\end{itemize}

\subsubsection{Property $\phi_9$.}
\begin{itemize}
\item Description:
Even if the previous advisory was ``weak right'',
the presence of a nearby intruder will cause the network to output a
 ``strong left'' advisory instead.
\item Tested on: $N_{3,3}$.
\item Input constraints:
    $2000 \leq \rho \leq 7000$,
    $-0.4 \leq \theta \leq -0.14$,
    $-3.141592 \leq \psi \leq -3.141592+0.01$,
    $100 \leq v_\text{own}\leq 150$,
    $0 \leq v_\text{int}\leq 150$.
\item Desired output property: the score for ``strong left'' is minimal.
\end{itemize}

\subsubsection{Property $\phi_{10}$.}
\begin{itemize}
\item Description:
For a far away intruder, the network advises COC.
\item Tested on: $N_{4,5}$.
\item Input constraints:
    $36000 \leq \rho \leq 60760$,
    $0.7 \leq \theta \leq 3.141592$,
    $-3.141592 \leq \psi \leq -3.141592+0.01$,
    $900 \leq v_\text{own}\leq 1200$,
    $600 \leq v_\text{int}\leq 1200$.
\item Desired output property: the score for COC is minimal.
\end{itemize}

\end{document}